\newtheorem{proposition}{Proposition}
\begin{document}
%
\title{Multi-grained Semantics-aware Graph Neural Networks}

%
%
%
%

\author{Zhiqiang Zhong,
        Cheng-Te Li,
        and~Jun Pang 
\IEEEcompsocitemizethanks{
\IEEEcompsocthanksitem Zhiqiang Zhong is with the Faculty of Science, Technology and Medicine, 
University of Luxembourg, Esch-sur-Alzette, Luxembourg. \protect\\
E-mail: zhiqiang.zhong@uni.lu

\IEEEcompsocthanksitem Cheng-Te Li is with the Institute of Data Science and the Department of Statistics, National Cheng Kung University, Tainan, Taiwan.\protect\\
E-mail: chengte@mail.ncku.edu.tw

\IEEEcompsocthanksitem Jun Pang is with the Faculty of Science, Technology and Medicine, 
and the Interdisciplinary Centre for Security, Reliability and Trust,
University of Luxembourg, Esch-sur-Alzette, Luxembourg. \protect\\
E-mail: jun.pang@uni.lu
}
}
\markboth{Journal of \LaTeX\ Class Files,~Vol.~14, No.~8, August~2015}%
{Shell \MakeLowercase{\textit{et al.}}: Bare Demo of IEEEtran.cls for Computer Society Journals}
%



\IEEEtitleabstractindextext{%
\begin{abstract}
Graph Neural Networks (GNNs) are powerful techniques in representation learning for graphs and have been increasingly deployed in a multitude of different applications that involve node- and graph-wise tasks. 
Most existing studies solve either the node-wise task or the graph-wise task independently while they are inherently correlated. 
This work proposes a unified model, AdamGNN, to interactively learn node and graph representations in a mutual-optimisation manner. 
Compared with existing GNN models and graph pooling methods, AdamGNN enhances the node representation with the learned multi-grained semantics and avoids losing node features and graph structure information during pooling. 
Specifically, a differentiable pooling operator is proposed to adaptively generate a multi-grained structure that involves meso- and macro-level semantic information in the graph. 
We also devise the unpooling operator and the \textit{flyback} aggregator in AdamGNN to better leverage the multi-grained semantics to enhance node representations. 
The updated node representations can further adjust the graph representation in the next iteration. 
Experiments on $14$ real-world graph datasets show that AdamGNN can significantly outperform $17$ competing models on both node- and graph-wise tasks. 
The ablation studies confirm the effectiveness of AdamGNN's components, and the last empirical analysis further reveals the ingenious ability of AdamGNN in capturing long-range interactions.

\end{abstract}

\begin{IEEEkeywords}
Graph neural networks, multi-grained semantic, hierarchical structure, representation learning
\end{IEEEkeywords}
}

\maketitle

\IEEEdisplaynontitleabstractindextext

%
\IEEEpeerreviewmaketitle


\IEEEraisesectionheading{\section{introduction} 
\label{sec:introduction}}
\IEEEPARstart{I}{n} many real-world applications, such as social networks, recommendation systems, and biological networks, data can be naturally organised as graphs~\cite{HYL17}. 
Nevertheless, working with this powerful node and graph representations remains a challenge, since it requires integrating the rich inherent features and complex structural information. 
To address this challenge, Graph Neural Networks (GNNs), which generalise deep neural networks to graph-structured data, have drawn remarkable attention from academia and industry, and achieve state-of-the-art performances in a multitude of applications~\cite{WPCLZY20,ZCZ20}. 
The current literature on GNNs can be used for tasks with two categories. 
One is to learn node representations to perform tasks such as link prediction~\cite{ZC18} and node classification~\cite{KW17,XHLJ19}.
The other is to learn graph representations for tasks, such as graph classification~\cite{YYMRHL18,GJ19,YJ20}.

On node-wise tasks, existing GNN models on learning node representations rely on a similar methodology that utilises a GNN layer to aggregate the sampled neighbouring nodes' features in a number of iterations, via non-linear transformation and aggregation functions.
Its effectiveness has been widely proved, however, a major limitation of these GNN models is that they are inherently \textit{flat} as they only propagate information across the observed edges in the original graph.
Thus, they lack the capacity to encode features in the high-order neighbourhood in the graphs~\cite{YYL19,BKMPRS20}.
For example, in an academic collaboration network, \textit{flat} GNN models could capture the micro semantic (e.g., co-authorships) between authors, but neglect their macro semantics (e.g., belonging to different research institutes).

On the other hand, graph classification is to predict the label associated with an entire graph by utilising the given graph structure and initial node features.
Nevertheless, existing GNNs for graph classification are unable to learn graph representations in a \textit{multi-grained} manner, which is crucial to encode better meso- and macro-level graph semantics hidden in the graphs for many practical applications such as drug design~\cite{SXZZZT20} and program analysis~\cite{LFPZ04}.
To remedy this limitation, novel pooling approaches have been proposed, where sets of nodes are recursively aggregated to form super nodes in the pooled graph.  
\textsc{DiffPool}~\cite{YYMRHL18} is a differentiable pooling operator but its assignment matrix is too \textit{dense}~\cite{CVJKL18} to apply on large graphs.
\textsc{g-U-Net}~\cite{GJ19}, \textsc{SagPool}~\cite{LLK19}, GXN~\cite{LCZT20} and ASAP~\cite{RST20} are four recently proposed methods that adopt the Top-$k$ selection strategy to address the sparsity concerns of \textsc{DiffPool}.
They score nodes based on a learnable projection vector and select a fraction of high scoring nodes as super nodes. 
However, the pre-defined pooling ratio limits these models' adaptivity on graphs with different sizes, and the Top-$k$ selection may easily lose important node features or graph structure by simply ignoring low scoring nodes.
As shown in Figure~\ref{fig:topk_selection} (Section~\ref{sec:related_work}), different numbers of $k$ will significantly affect the number of covered nodes of super nodes in the pooled graph, which means the important nodes' features could get lost during the trivial pooling strategy. 
The hyper-parameter $k$ is also crucial for the final performance~\cite{GJ19}, thus reduces their convenience in applications.

In the end, we argue that node- and graph-wise tasks are inherently correlated with one another. 
That said, node representations form graph representation, and graph representation can provide node representations with meso/macro-level semantic information in the graph.
Joint modelling with node- and graph-wise tasks will allow GNNs to overcome the limitation of \textit{flat} propagation mode in capturing multi-grained semantics,
and the enriched node representation could further ameliorate the graph representation. 
However, to the best of our knowledge, none of the existing work simultaneously exploit node- and graph-wise tasks, along with capturing multi-grained semantics hidden in the graph, to learn representations of nodes and the graph.

In this work, we propose a novel framework, \textit{\underline{Ada}ptive \underline{M}ulti-grained Graph Neural Networks} (AdamGNN), which integrates graph convolution, adaptive pooling and unpooling operations into one framework to generate both node and graph level representations interactively.
Unlike the above-mentioned GNN models, we treat node and graph representation learning tasks in a unified framework so that they can collectively optimise each other during training.
In modelling multi-grained semantics, the adaptive pooling and unpooling operators preserve the important node features and hierarchical structural features.
More concretely, as shown in Figure~\ref{fig:architesture_AdamGNN}-(a),
we employ {\it (i)} an adaptive graph pooling (AGP) operators to generate a multi-grained structure based on the derived primary node representations by a GNN layer, 
{\it (ii)} graph unpooling (GUP) operators to further distribute the explored meso- and macro-level semantics to the corresponding nodes of the original graph, 
and {\it (iii)} a \textit{flyback} mechanism to integrate all received multi-grained semantic messages as the evolved node representations.
Besides, the attention-enhanced \textit{flyback} aggregator provides a reasonable explanation of the importance of messages from different grains.
Experimental results reveal the effectiveness of AdamGNN, and the ablation and empirical studies confirm the effectiveness and flexibility of different components in AdamGNN.
At last, through case studies, AdamGNN is shown to highlight variant-range node interactions in different graph datasets.

Our contributions can be summarised as follows.
(1) We propose a novel framework, AdamGNN\footnote{Code and data are available at: \url{https://github.com/zhiqiangzhongddu/AdamGNN}}, 
that adaptively integrates multi-grained semantics into node representations and achieves mutual optimisation between node-wise and graph-wise tasks in one unified process.
(2) An adaptive and efficient pooling operator is devised in AdamGNN to generate the multi-grained structure without introducing any hyper-parameters.
(3) An attention-based flyback aggregation can provide model explainability on how different grains benefit the prediction tasks.
(4) Extensive experiments on $14$ real-world datasets demonstrate the promising performance of AdamGNN, along with providing insightful explanations with case studies.


\section{Related Work} 
\label{sec:related_work}
\textbf{Graph neural networks}.
The existing GNN models can be generally categorised into spectral and spatial approaches.
The spectral approach utilises the Fourier transformation to define convolution operation in the graph domain~\cite{BZSL14}.
However, its incurred heavy computation cost hinders it from being applied to large-scale graphs.
Later on, a series of spatial models drawn remarkable attention due to their effectiveness and efficiency in node-wise tasks~\cite{KW17,HYL17,VCCRLB18,XHLJ19,CCBLV20,LYW21,TC21,CWHDL20,FZDHLXYKT20}, such as link prediction, node classification and node clustering.
They mainly rely on the \textit{flat} message-passing mechanism 
that defines convolution by iteratively aggregating messages from the neighbouring nodes. 
Recent studies have proved that the spatial approach is a special form of Laplacian smoothing and is limited to summarising each node's local information~\cite{LHW18,CLLLZS20}. 
Besides, they are either unable to capture global information
or incapable of aggregating messages in a multi-grained manner to support graph classification tasks.

\begin{figure}[!t]
\centering
\includegraphics[width=.75\linewidth]{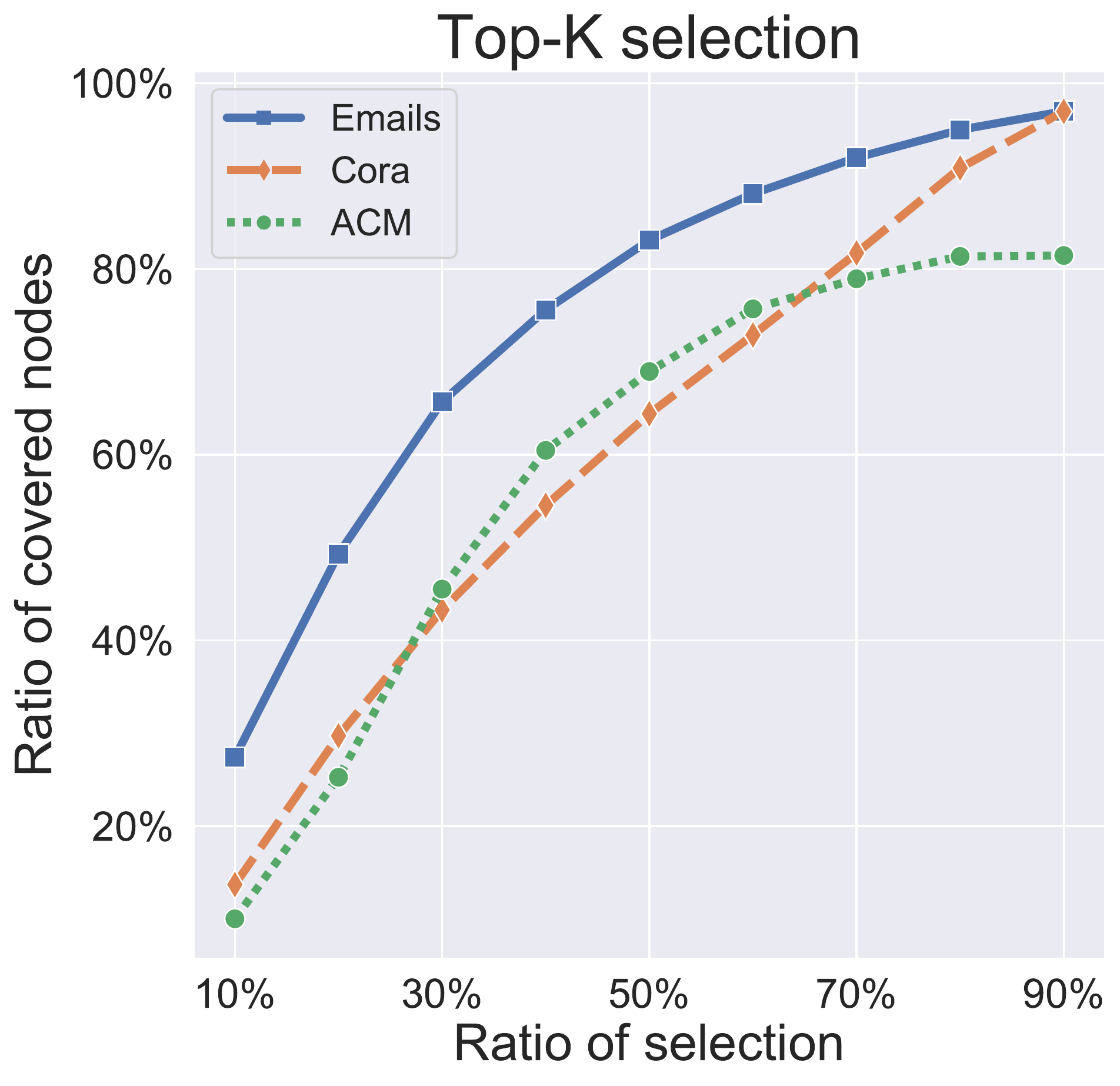}
\caption{
Ratio of covered nodes with various selection ratio.
}
\label{fig:topk_selection}
\vspace{-3mm}
\end{figure}

\smallskip\noindent
\textbf{Graph pooling}.
Pooling operation overcomes GNN's weakness in generating graph-level representation by recursively merge sets of nodes to form super nodes in the pooled graph.
\textsc{DiffPool}~\cite{YYMRHL18} is a differentiable pooling operator, which learns a soft assign matrix that maps each node to a set of clusters, treated as super nodes.
Since this assignment is rather \textit{dense} that incurs high computation cost, it is not scalable for large graphs~\cite{CVJKL18}.
Following this direction, a Top-$k$ selection based pooling layer (\textsc{g-U-Net}) is proposed to select important nodes from the original graph to build a pooled graph~\cite{GJ19}.
\textsc{SagPool}~\cite{LLK19} and ASAP~\cite{RST20} further use attention and self-attention for cluster assignment, GXN~\cite{LCZT20} uses mutual information estimation for super node selection. 
They address the problem of sparsity in \textsc{DiffPool}, however, such a manual-defined hyper-parameter $k$ is quite sensitive to the final performance~\cite{GJ19}, thus limits the adaptivity of these models on graphs of different sizes. 
In addition, as shown in Figure~\ref{fig:topk_selection}, different $k$ values will significantly affect the number of covered nodes in the graph, which means the important node features could get lost during the trivial pooling operation. 
Note that nodes covered by a super node refer to nodes involved in the super node's aggregation tree.

Recently, \textsc{EigenPool}~\cite{MWAT19} proposes a pooling operator based on graph Fourier which does not rely on the Top-$k$ selection strategy, \textsc{StructPool}~\cite{YJ20} designs strategies to involve both node and graph structures, and includes conditional random fields technique to ameliorate the cluster assignment.
However, \textsc{StructPool} treats the graph assignment as a \textit{dense} clustering problem, which gives rise to a high computation complexity as in \textsc{DiffPool}.

\begin{table}[!t]
\caption{Model comparison from various aspects: Node-wise Task (NT), Graph-wise Task (GT), Pooling and/or Unpooling (P/U), Adaptive Pooling (AP), Efficient Pooling (EP), Multi-grained Explanation (ME).
}
\label{table:comparison_diff_models}
\centering
\begin{tabular}{l | c | c | c | c | c | c}
\hline
& NT & GT & P/U & AP & EP & ME     \\
\hline
\hline
GCN~\cite{KW17}                     & $\checkmark$   & &              &           &           \\
\hline
GraphSAGE~\cite{HYL17}              & $\checkmark$   & &             &           &           \\
\hline
GAT~\cite{VCCRLB18}                 & $\checkmark$   & &              &           &           \\
\hline
GIN~\cite{XHLJ19}                   & $\checkmark$   & $\checkmark$ &              &           &           \\
\hline
PNA~\cite{CCBLV20}                  & $\checkmark$   & $\checkmark$ &              &           &           \\
\hline
\textsc{GCNII}~\cite{CWHDL20}       & $\checkmark$   & &              &           &           \\
\hline
\textsc{GRAND}~\cite{FZDHLXYKT20}   & $\checkmark$   & &              &           &           \\
\hline
\textsc{DiffPool}~\cite{YYMRHL18}   &           & $\checkmark$ & P &        &           &           \\
\hline
\textsc{g-U-Net}~\cite{GJ19}        & $\checkmark$   & $\checkmark$ & P, U &   &  $\checkmark$  &           \\
\hline
\textsc{SagPool}~\cite{LLK19}       &           & $\checkmark$ & P &  & $\checkmark$   &           \\
\hline
\textsc{EigenPool}~\cite{MWAT19}    &           & $\checkmark$ & P & $\checkmark$     & $\checkmark$   &           \\
\hline
GXN~\cite{LCZT20}        & $\checkmark$   & $\checkmark$ & P, U &   &  $\checkmark$  &           \\
\hline
\textsc{StructPool}~\cite{YJ20}     &           & $\checkmark$ & P &   &           &           \\
\hline
ASAP~\cite{RST20}                   &           & $\checkmark$ & P &  & $\checkmark$   &           \\
\hline
\textbf{AdamGNN}                             & $\checkmark$   & $\checkmark$ & P, U &  $\checkmark$      & $\checkmark$   & $\checkmark$   \\
\hline
\end{tabular}
\vspace{-3mm}
\end{table}

\smallskip\noindent
\textbf{Discussion}.
Table~\ref{table:comparison_diff_models} summarises the key advantages of the proposed AdamGNN and compares it with a number of state-of-the-art methods.
Among the existing GNN models, \textsc{g-U-Net}, GXN and AdamGNN support both node- and graph-level tasks.
However, {\it (i)} the Top-\textit{k} selection strategy of \textsc{g-U-Net} and GXN introduces a new hyper-parameter and may lose important node features or graph structure;
{\it (ii)} \textsc{g-U-Net} and GXN generate super graph only with $k$ selected super nodes, which ruins multi-grained semantics of the original graph;
{\it (iii)} \textsc{g-U-Net} does not support mini-batch because it needs to compute scores of all nodes in one big batch to select super nodes;
{\it (iv)} GXN requires positive and negative node sets sampling which introduces additional operation and the random sampling strategy brings unmanageable uncertainty on model performance~\cite{YDZYZT20}.
On the contrary, AdamGNN is a unified framework that adaptively integrates multi-grained semantics into node representations, and achieves a mutual optimisation between node- and graph-wise tasks. 
Besides, AdamGNN supports efficient mini-batch pooling and unpooling, and also provides model explanation via the multi-grained semantics.
Therefore, we believe AdamGNN's framework is more general, effective and scalable than G-U-Net. 



\section{Proposed Approach} 
\label{sec:proposed_approach}
\subsection{Preliminaries}
\label{subsec:preliminaries}
A graph with $n$ nodes can be formally represented as $\mathcal{G}=(\mathcal{V}, \mathcal{E}, \mathbf{X})$,
where $\mathcal{V}=\{v_{1}, \dots, v_{n} \}$ is the node set, $\mathcal{E} \subseteq \mathcal{V} \times \mathcal{V}$ denotes the set of edges, and $\mathbf{X} \in \mathbb{R}^{n \times \pi}$ represents nodes' features ($\pi$ is the dimensionality of node features).
Besides, $\mathcal{V}$ and $\mathcal{E}$ can be summarised in adjacency matrix $\mathbf{A} \in \{0, 1\}^{n \times n}$.

For node-wise tasks, the goal is to learn a mapping function $f_{n}: \mathcal{G} \to \mathbf{Z}$, 
where $\mathbf{Z} \in \mathbb{R}^{d}$, and each row $\mathbf{z}_{i}\in \mathbf{Z}$ corresponds to node $v_i$'s representation.
For graph-wise tasks, similarly it aims to learn a mapping $f_{g}: \mathcal{D} \to \mathbf{Z}$, where $\mathcal{D} = \{\mathcal{G}_{1}, \mathcal{G}_{2}, \dots\}$ is a set of graphs,
each row $\mathbf{z}_{i} \in \mathbf{Z}$ corresponds to the graph $\mathcal{G}_{i}$'s representation. 
The mapping function's effectiveness $f_{n}$ and $f_{g}$ is evaluated by applying $\mathbf{Z}$ to different downstream tasks.

\smallskip\noindent
\textbf{Primary node representation}.
We use Graph Convolution Network (GCN)~\cite{KW17} as an example primary GNN encoder to obtain the node representation:
\begin{equation}
\label{eq:gcn}
    \mathbf{H}^{(\ell+1)} = \mathrm{ReLU}(\hat{\mathbf{D}}^{-\frac{1}{2}} \hat{\mathbf{A}} \hat{\mathbf{D}}^{\frac{1}{2}} \mathbf{H}^{(\ell)} \mathbf{W}^{(\ell)}),
\end{equation}
where $\hat{\mathbf{A}}= \mathbf{A} + \mathbf{I}$, $\hat{\mathbf{D}}=\sum_{j} \hat{\mathbf{A}}_{ij}$ 
and $\mathbf{W}^{(\ell)} \in \mathbb{R}^{n \times d}$ is a trainable weight matrix for layer $\ell$.
$\mathbf{H}^{(\ell)}$ is the generated node representation of layer $\ell$ which is defined as the primary node representations $\mathbf{H}=\mathbf{H}^{(\ell)}$.
Node representations are generated based on each target node's local neighbours, which are aggregated via learning based on the adjacency matrix $\mathbf{A}$.
GCN cannot capture meso/macro-level knowledge, even with stacking multiple layers. 
Hence we term such generated node representations as primary node representations. 

\begin{figure*}[!t]
\centering
\includegraphics[width=.6\linewidth]{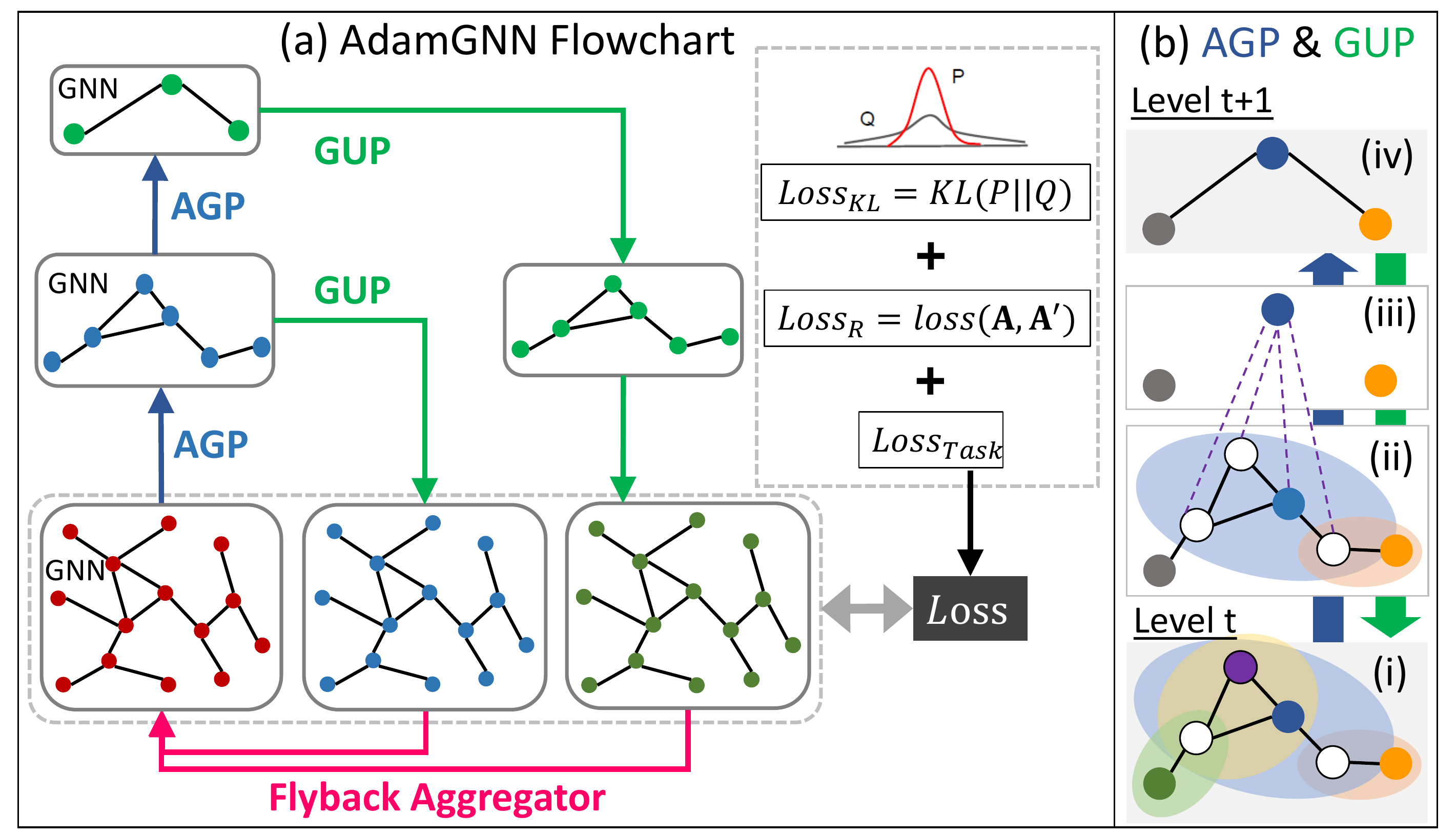}
\caption{
(a) An illustration of AdamGNN with 3 levels.
AGP: adaptive graph pooling,
GUP: graph unpooling.
(b) An example of performing adaptive graph pooling on a graph:
(i) ego-network formation,
(ii-iii) super node generation,
(iv) maintaining super graph connectivity.
}
\label{fig:architesture_AdamGNN}
\vspace{-3mm}
\end{figure*}

\subsection{Adaptive Graph Pooling for Multi-grained Structure Generation}
\label{subsec:adaptive_graph_pooling}
The proposed model, AdamGNN, adaptively generates a multi-grained structure to realise the collective optimisation between the node and graph level tasks within one unified framework.
The key idea is that applying an adaptive graph pooling operator to present the multi-grained semantics of $\mathcal{G}$ explicitly and improve the node representation generation with the derived meso/macro information. 
While AdamGNN is usually performed under multiple levels of granularity ($T$ different grains), in this section, we present how level $t$'s super graph is adaptively constructed based on graph of level $t\!-\!1$, i.e., $\mathcal{G}_{t\!-\!1} = (\mathcal{V}_{t\!-\!1}, \mathcal{E}_{t\!-\!1}, \mathbf{X}_{t\!-\!1})$.

\smallskip\noindent
\textbf{Ego-network formation}.
We initially consider the graph pooling as an ego-network selection problem, i.e., each ego node can determine whether to aggregate its local neighbours to form a super node, resolving the \textit{dense} issue of \textsc{DiffPool} by avoiding to use a dense assignment matrix.
As shown in Figure~\ref{fig:architesture_AdamGNN}-(b)-(i),
each ego-network $c_{\lambda}$ contains the ego and its local neighbours $\mathcal{N}^{\lambda}_{i}$ within $\lambda$-hops., 
i.e., $\mathcal{N}^{\lambda}_{i} = \{v_{j} \mid \, d(v_{i}, v_{j}) \leq \lambda \}$, 
where $d(v_{i}, v_{j})$ means the shortest-path length between $v_{i}$ and $v_{j}$.
Thus an ego-network can be formally presented as:
$c_{\lambda}(v_{i}) = \{v_j \mid \forall v_j \in \mathcal{N}^{\lambda}_{i} \}$, and a set of ego-networks $\mathcal{C}_{\lambda} = \{c_{\lambda}(v_{1}), \dots, c_{\lambda}(v_{n}) \}$ can be generated from $\mathcal{G}$.
We will investigate the impact of the ego-network size in the ablation studies of  Section~\ref{subsec:experimental_evaluation_and_ablation_study}

\smallskip\noindent
\textbf{Super node determination}.
Given $\mathcal{G}$ with $n$ nodes has $n$ ego-networks, forming a super graph with all ego-networks will blur the useful multi-grained semantics and lead to a high computation cost.
We select a fraction of ego-networks from $\mathcal{C}_{\lambda}$ to organise the multi-grained semantics of $\mathcal{G}$.
We make the selection based on a closeness score $\phi_{i}$ that evaluates the representativeness of the ego $v_{i}$ to its local neighbours $v_{j} \in c_{\lambda}(v_{i})$.
We first create a function to calculate the closeness score $\phi_{ij}$ between $v_{i}$ and $v_{j}$:
\begin{equation} 
\label{eq:fiscore}
    \begin{aligned}
    \phi_{ij} = & ~f^{\it non}_{\phi}(v_{i}, v_{j}) \times f^{\it lin}_{\phi}(v_{i}, v_{j}) \\
    = & \mathrm{~Softmax}(\overrightarrow{\mathbf{a}}^{\mathrm{T}} \, \sigma(\mathbf{W}\mathbf{H}[j] \parallel \mathbf{W}\mathbf{H}[i])) \times \\ &  \mathrm{~Sigmoid}(\mathbf{H}^{\mathrm{T}}[j] \cdot \mathbf{H}[i]),
    \end{aligned}
\end{equation}
where $\overrightarrow{\mathbf{a}} \in \mathbb{R}^{2\pi}$ is the weight vector, $\parallel$ is the concatenation operator and $\sigma$ is an activation function ($\mathrm{LeakReLU}$) to avoid the vanishing gradient problem~\cite{H98} during the model training process. 
$f^{\it non}_{\phi}(v_{i}, v_{j}) = \frac{\exp(\overrightarrow{\mathbf{a}}^{\mathrm{T}} \, \sigma(\mathbf{W} \mathbf{H}[j] \parallel \mathbf{W} \mathbf{H}[i]))}{\sum_{v_{r} \in \mathcal{N}^{\lambda}_{j}}\exp(\overrightarrow{\mathbf{a}}^{\mathrm{T}} \, \sigma(\mathbf{W}\mathbf{H}[j] \parallel \mathbf{W}\mathbf{H}[r]))}$ calculates one component of $\phi_{ij}$ considering the \textit{non-linear} relationship between node $v_{j}$'s and ego $v_{i}$' representations, and its output lies in $(0, 1)$ as a valid probability for ego-network selection.
Meanwhile, we further add another component $f^{\it lin}_{\phi}(v_{i}, v_{j})=\mathrm{Sigmoid}(\mathbf{H}^{\mathrm{T}}[j] \cdot \mathbf{H}[i])$ to supercharge $\phi_{ij}$ with the \textit{linearity} between node $v_{j}$ and ego $v_{i}$ to capture more comprehensive information.
Consequently, nodes have similar features and structure information to the ego will contribute to higher closeness scores.
In the end, we produce the closeness score of $c_{\lambda}(v_{i})$ as:
\begin{equation}
\label{eq:sum_fiscore}
    \phi_{i} = \frac{1}{\vert\mathcal{N}^{\lambda}_{i}\vert} \sum_{v_{j} \in \mathcal{N}^{\lambda}_{i}} \phi_{ij},
\end{equation}
where $\vert\mathcal{N}^{\lambda}_{i}\vert$ indicates the number of nodes in $\mathcal{N}^{\lambda}_{i}$.

After obtaining ego-networks' closeness scores, we propose an adaptive approach to select a fraction of ego-networks to form super nodes without pre-defined hyper-parameters (cf. the Top-$k$ selection strategy~\cite{GJ19}).
Our key intuition is that a high diameter ego-network could be composed of multiple low diameter ego-networks.
Therefore, we intend first to find these low diameter ego-networks,
then recursively aggregate them to form a super node that contains these ego-networks.
Specifically, we form ego-networks by selecting a fraction of egos $\hat{N}_{p}$ as:
$\hat{N}_{p}=\{v_{i} \mid  \phi_{i} > \phi_{j}, \,  \forall v_{j} \in \mathcal{N}^{1}_{i}\}$,
where $\mathcal{N}^{1}_{i}$ means the neighbour nodes of node $v_{i}$ within the first hop.
Note that each node may belong to various ego-networks since they may play different roles in different groups.
Therefore, we allow overlapping between different selected ego-networks
and utilise $\mathcal{N}^{1}_{i}$ instead of $\mathcal{N}^{\lambda}_{i}$.
If we adopt $\mathcal{N}^{\lambda}_{i}$ here, 
the selected ego node $v_{i}$ cannot be involved in other ego-networks anymore.
Following this, we can select a fraction of ego-networks to form super nodes at granularity level $t$.

\begin{proposition}
\label{prop:1}
Let $\mathcal{G}$ be a connected graph with $n$ nodes, 
and a total number of $n$ ego-networks can be formed from the graph $\mathcal{G}$, i.e., $\mathcal{C}_{\lambda} = \{c_{\lambda}(v_{1}), c_{\lambda}(v_{2}), \dots, c_{\lambda}(v_{n}) \}$.
Each ego-network $c_{\lambda}(v_{i})$ will be assigned with a closeness score $\phi_{i}$.
Then, there exist at least one ego-network $c_{\lambda}(v_{i})$ that satisfies $\phi_{i} > \phi_{j}, \,  \forall v_{j} \in \mathcal{N}^{1}_{i}$.
\end{proposition}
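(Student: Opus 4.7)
The plan is to prove the existence of such an ego-network by a simple extremal argument. First I would set $v^{*} = \argmax_{v_i \in \mathcal{V}} \phi_i$, and write $\phi^{*} = \phi_{v^{*}}$; both are well-defined because $\mathcal{V}$ is finite and non-empty. By definition of the global maximum, $\phi^{*} \geq \phi_j$ for every neighbor $v_j \in \mathcal{N}^{1}_{v^{*}}$, so $c_{\lambda}(v^{*})$ is a candidate witness, up to whether the inequality is strict. The first step in the write-up would therefore be to state this maximum argument and observe that the finiteness of $\mathcal{V}$ is the only ingredient needed for existence; no recursion or induction is required, and the connectedness assumption in the proposition is in fact unnecessary for this direction.

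The hard part is upgrading ``$\geq$'' to the strict ``$>$'' demanded by the conclusion. The scores $\phi_i$ are averages of the terms $\phi_{ij}$, each of which is the product of a softmax component $f^{\it non}_{\phi}$ and a sigmoid component $f^{\it lin}_{\phi}$ evaluated on the continuous-valued primary representations $\mathbf{H}$; so for generically trained weights $\mathbf{W}$ and $\overrightarrow{\mathbf{a}}$, the values $\{\phi_i\}_{i=1}^{n}$ are pairwise distinct almost surely, and the global maximum is unique. Under this mild genericity assumption the strict inequality follows at once from the maximum argument. To handle degenerate ties, I would also sketch a contradiction-based variant: suppose for every $v_i$ some $v_j \in \mathcal{N}^{1}_{i}$ satisfies $\phi_j \geq \phi_i$; then starting at an arbitrary node and iteratively moving to such a neighbor produces an infinite walk along which $\phi$ is non-decreasing, which must eventually revisit a vertex because $\mathcal{V}$ is finite, so $\phi$ is constant on the resulting closed walk. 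Imposing a fixed tie-breaking rule (e.g., lexicographic by node index) to extend $\phi$ into a strict total order, I would then pick the max of this refined order inside the closed walk and contradict the hypothesis.

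In summary the proof is essentially one line from a global extremum plus a treatment of ties, and the main obstacle is not existence per se but rather how formally the measure-zero event of two adjacent vertices having exactly equal closeness scores should be handled. My write-up would therefore open with an explicit genericity remark on the trained parameters (or, equivalently, a tie-breaking rule) and then close with the short extremal argument described above.
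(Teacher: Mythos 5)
Your proposal is correct and takes essentially the same route as the paper: pick the ego-network whose closeness score $\phi_i$ attains the global maximum and observe that it dominates all of its $1$-hop neighbours. In fact you are more careful than the paper's own proof, which merely ``assumes that these cluster closeness scores are not all the same'' --- an assumption that does not by itself rule out a neighbour tying the maximum --- whereas your genericity remark (or explicit tie-breaking rule) is exactly what is needed to justify the strict inequality $\phi_i > \phi_j$.
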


\begin{proof}[Proof of Proposition~\ref{prop:1}]
For $\mathcal{G}=(\mathcal{V}, \mathcal{E}, \mathbf{X})$ with $n$ nodes. 
$n$ ego-networks can be generated by following the procedures, $c_{\lambda}(v_{i}) = \{j \mid \forall j \in \mathcal{N}^{\lambda}_{i} \}$, and each ego-network will be given a closeness score $\phi_{i}$ as Equation~\ref{eq:sum_fiscore}.
We assume that these cluster closeness scores are not all the same, thus there exists at least one maximum $\phi_{max}$.
Hence, the clusters with closeness score $\phi_{max}$ satisfy the requirements of ego-network selection requirement that:
\begin{equation}
    \phi_{max} > \phi_{j}, \, \, \,  \forall v_j \in \mathcal{N}^{1}_{max},
\end{equation}
where $\mathcal{N}^{1}_{max} = \{v_{j} \mid if \, d(v_{i}, v_{j}) = 1 \}$.
So, for any connected $\mathcal{G}$ with $n$ nodes, there exists at least one cluster satisfies the requirements of our ego-network selection approach.
\end{proof}

Proposition~\ref{prop:1} ensures that our super node determination method can find at least one ego-network to generate a super graph for any graph.
It guarantees the generality of our strategy.

Meanwhile, we would also retain nodes that do not belong to any selected ego-networks, denoted as $\hat{N}_{r}$, to maintain the graph structure:
$\hat{N}_{r}=\{v_{j} \mid v_{j} \notin c_{\lambda}(v_{i}), \forall v_{i} \in \hat{N}_{p} \}$. 
In this way, a super node formation matrix $\mathbf{S}_{t} \in \mathbb{R}^{n \times (\vert\hat{N}_{p}\vert+\vert\hat{N}_{r}\vert) }$ can be formed, where $(\vert\hat{N}_{p}\vert+\vert\hat{N}_{r}\vert)$ is number of nodes of the generated super graph, rows of $\mathbf{S}_{t}$ corresponds to the $n$ nodes of $\mathcal{G}_{t\!-\!1}$, and columns of $\mathbf{S}_{t}$ corresponds to the selected ego-networks ($\hat{N}_{p}$) plus the remaining nodes ($\hat{N}_{r}$).
We have $\mathbf{S}_{t}[i,j]=\phi_{ij}$ if node $v_{j}$ belongs to the selected ego-network $c_{\lambda}(v_{i})$ and $\mathbf{S}_{t}[i,j]=1$ if node $v_{j}$ is a remaining node corresponds to node $v_{i}$ in the super graph; otherwise $\mathbf{S}_{t}[i,j]=0$.
The weighted super node formation matrix $\mathbf{S}_{t}$ can better maintain the relation between different super nodes in the pooled graph.

\smallskip\noindent
\textbf{Maintaining super graph connectivity}.
After selecting the ego-networks and retaining nodes in level $t\!-\!1$, as shown in Figure~\ref{fig:architesture_AdamGNN}-(b)-(iii-iv), we construct the new adjacent matrix $\mathbf{A}_{t}$ for the super graph using $\hat{\mathbf{A}}_{t\!-\!1}$ and $\mathbf{S}_{t}$ as follows:
$\mathbf{A}_{t} = \mathbf{S}_{t}^{\mathrm{T}} \hat{\mathbf{A}}_{t\!-\!1} \mathbf{S}_{t}$.
This formula makes any two super nodes connected if they share any common nodes or any two nodes are already neighbours in $\mathcal{G}_{t\!-\!1}$.
In addition, $\mathbf{A}_{t}$ will retain the edge weights passed by $\mathbf{S}_{t\!-\!1}$
that involves the relation weights between super nodes.
Eventually, we obtain a generated super graph $\mathcal{G}_{t}$ at granularity level $t$.

\smallskip\noindent
\textbf{Super node feature initialisation}.
All nodes in the super graph $\mathcal{G}_{t}$ need initial feature vectors to support the graph convolution operation.
Recall that we have the closeness score as calculated in Eq.~\ref{eq:fiscore}, between node $v_{j}$ to the ego $v_{i}$.
However, this is not equivalent to the contribution of node $v_{j}$'s feature to the super node feature, since we need to compare the relationship strength between ego $v_{i}$ and $v_{j}$ with the relation between other $v_{r} \in c_{\lambda}(v_{i})$.
Therefore, we further propose a super node feature initialisation method through a self-attention mechanism~\cite{VCCRLB18}. 
Specifically, it can be described as:
\begin{equation}
    \mathbf{X}_{t}[i] = \mathbf{H}_{t\!-\!1}[i] + \sum_{v_{j} \in c_{\lambda}(v_{i})\setminus v_{i}} \alpha_{ij} \mathbf{H}_{t\!-\!1}[j],
\end{equation}
where $\mathbf{H}_{t\!-\!1}$ is the generated node representation by the ($t\!-\!1$)-th primary GNN layer, i.e., at level $t\!-\!1$ with a similar method as Equation~\ref{eq:gcn}, 
$\alpha_{ij}$ describes the importance of node $v_{j}$ to the initial feature of $c_{\lambda}(v_{i})$ at level $t$.
And $\alpha_{ij}$ can be learned as follows:
$\alpha_{ij} = \frac{\exp(\overrightarrow{\mathbf{a}_{1}}^{\mathrm{T}} \, \sigma(\mathbf{W} (\phi_{ij} \mathbf{H}_{t-1}[j]) \parallel \mathbf{H}_{t-1}[i]))}{\sum_{v_{r} \in c_{\lambda}(v_{i})}\exp(\overrightarrow{\mathbf{a}_{1}}^{\mathrm{T}} \, \sigma(\mathbf{W} (\phi_{ir} \mathbf{H}_{t-1}[r]) \parallel \mathbf{H}_{t-1}[i]))})$ 
where $\overrightarrow{\mathbf{a}_{1}} \in \mathbb{R}^{2\pi}$ is the weight vector. 
For the remaining nodes $\hat{N}_{r}$ that do not belong to any super nodes, we keep their representations of $\mathbf{H}_{t\!-\!1}$ as initial node features. 

\subsection{Graph Unpooling}
\label{subsec:graph_unpooling}
Different from existing graph pooling models~\cite{YYMRHL18,CVJKL18,LLK19,RST20,YJ20} which only coarsen graphs to generate graph representations, we aim to mutually utilise node-wise and graph-wise tasks to better encode multi-grained semantics into both node and graph representations under a unified framework.
We design a mechanism to allow the learned multi-grained semantics to enrich the node representations of the original graph $\mathcal{G}$ as shown in Figure~\ref{fig:architesture_AdamGNN}-(a).
Vice versa, the updated node representation can further ameliorate the graph representation in the next training iteration.
A reasonable unpooling operation that passes macro-level information to original nodes has not been well studied in the literature.
For instance, Gao et al.~\cite{GJ19} directly relocate the super node back into the original graph and utilise other GNN layers to spread its message to other nodes.
However, these additional aggregation operations cannot allow each node to receive meaningful information since some nodes may distant from super nodes, in such case these operations can exacerbate local-smoothing~\cite{LHW18}.

We implement the unpooling process 
by devising a \textit{top-down} message-passing mechanism,
which endows GNN models with meso/macro level knowledge.
Specifically, since $\mathbf{S}_{t}$ records how nodes of $\mathcal{G}_{t-1}$ form the super nodes of $\mathcal{G}_{t}$, 
so we utilise $\mathbf{S}_{t}$ to restore the generated ego-network representation at level $t$ to that at level $t\!-\!1$ until we arrive at the original graph $\mathcal{G}$, i.e., $t \to 0$, as follows:
\begin{equation}
    \hat{\mathbf{H}}_{t} = (\mathbf{H}_{t}^{\mathrm{T}} \mathbf{S}_{t}^{\mathrm{T}} \mathbf{S}_{t\!-\!1}^{\mathrm{T}} \dots \mathbf{S}_{1}^{\mathrm{T}})^{\mathrm{T}},
\end{equation}
where $\hat{\mathbf{H}}_{t} \in \mathbb{R}^{n \times d}$.
At the end of each iteration, nodes in the original graph $\mathcal{G}$ will receive high-level semantic messages from the different levels, i.e., $\{\hat{\mathbf{H}}_{1}, \dots , \hat{\mathbf{H}}_{T}\}$.
As illustrated in Figure~\ref{fig:architesture_AdamGNN}-(b)-(iv-i), the graph unpooling process can be treated as an inverse process of adaptive graph pooling process. 

\subsection{Flyback Aggregation}
\label{subsec:flyback_agg}
Since the super graphs at different granularity levels present multi-grained semantics and how each node utilises the received semantic information with its $flat$ representation is a challenging question. 
And nodes of the same graph may need different granularity levels' information. 
Therefore, we propose a novel attention mechanism to integrate the derived representations at different levels, given by:
\begin{equation}
    \mathbf{Z} = \mathbf{H} + \sum_{\forall t} \boldsymbol{\beta}_{t} \, \hat{\mathbf{H}}_{t},
\end{equation}
where the attention score $\boldsymbol{\beta}_{t}$ estimates the importance of the message from level $t$, given by:
$\boldsymbol{\beta}_{t}[i] =\frac{\mathrm{exp}(\overrightarrow{\mathbf{a}_{2}}^{\mathrm{T}} \, \sigma(\mathbf{W} \hat{\mathbf{H}}_{t}[i]) \parallel \mathbf{H}[i])))}{\sum_{j \in T} \mathrm{exp}(\overrightarrow{\mathbf{a}_{2}}^{\mathrm{T}} \, \sigma(\mathbf{W} \hat{\mathbf{H}}_{j}[i]) \parallel \mathbf{H}[i])))}$
where $\overrightarrow{\mathrm{\mathbf{a}}_{2}} \in \mathbb{R}^{2\pi}$ is the weight vector.
We term this process as the \textit{flyback} aggregation, which considers the attention scores of different levels and allows each node to decide whether/how to utilise semantic information from different granularity levels.
We will verify the effectiveness of \textit{flyback} aggregation in the ablation study of Section~\ref{subsec:experimental_evaluation_and_ablation_study} and discuss the explainability in Section~\ref{subsec:more_model_analysis}.

\subsection{Training Strategy}
\label{subsec:train_strategy}
Till now, there are still two challenges when training the model.
The first is how to highlight the difference among nodes' representations from different ego-networks.
Nodes belong to neighbouring ego-networks receive closely related messages from super nodes, since their super nodes are connected in the super graph, and local smoothing makes their representation vectors similar.
Representations of proximal nodes could be further closer to each other in the representation latent space.
To address this problem and enhance the discrimination capability between ego-networks, we exploit a self-optimisation strategy~\cite{XGF16}, which let nodes in different ego-networks distinguishable.
Specifically, we use the Student's \textit{t}-distribution ($Q$) as a kernel to measure the similarity between representation vectors of $v_{j}$ and ego $v_{i}$: $q_{ij}=\frac{(1+ \parallel \mathbf{Z}[j] - \mathbf{Z}[i] \parallel^{2}/\mu)^{-\frac{\mu+1}{2}}}{\sum_{i'}(1+ \parallel \mathbf{Z}[j] - \mathbf{Z}[i'] \parallel^{2}/\mu)^{-\frac{\mu+1}{2}}}$, 
where $v_{j}\in c_{\lambda}(v_{i})$, $v_{i'}$ are other ego nodes, $\mu$ are the degrees of freedom of Student's $t$-distribution.
Following this, $q_{ij}$ can be integrated the probability of assigning node $v_{j}$ to ego $v_{i}$.
In this paper, we set $\mu = 1$ the same as~\cite{XGF16}.
After, we propose to learn better node representations by matching $Q$ to the auxiliary target distribution ($P$), and we choose the proposition of~\cite{XGF16} which first raises $q_{i}$ to the second power and then normalises by frequency per ego-network:
$p_{ij}=\frac{q^{2}_{ij} / g_{i}}{\sum_{i'}(q^{2}_{\hat{i}j} / g_{i'})}$,
where $g_{i} = \sum_{j}q_{ij}$.
Therefore, apart from the task-related loss function $\mathcal{L}_{task}$,
we further define a {\it KL} divergence loss as:
\begin{equation}
    \mathcal{L}_{\it KL}={\it KL}(P \parallel Q)=\sum_{v_{j}} \sum_{v_{i}}p_{ij}\,\mbox{log}\frac{p_{ij}}{q_{ij}},
\end{equation}

The second challenge is to avoid the over-smoothing problem that nodes of a graph tend to have indistinguishable representations. 
GNN is proved as a special form of Laplacian smoothing~\cite{LHW18,CLLLZS20} that naturally assimilates the nearby nodes' representations.
AdamGNN will further exacerbate this problem because it distributes semantic information from one super node representation to all nodes of the ego network.
Therefore, we introduce the reconstruction loss, 
which can alleviate the over-smoothing issue and drive the node representations to retain the structure information of $\mathcal{G}$ by differentiating non-connected nodes' representations.
Specifically, the reconstruction loss is defined as:
\begin{equation}
    \mathcal{L}_{\it R} = -\frac{1}{n}\sum(\mathbf{A}_{ij} \cdot \mbox{log}(\mathbf{A}'_{ij})+(1 - \mathbf{A}_{ij})\cdot \mbox{log}(1 - \mathbf{A}'_{ij})),
\end{equation}
where $\mathbf{A}' = \mathrm{Sigmoid}(\mathbf{Z}^{\mathrm{T}} \mathbf{Z})$. 
Therefore, the overall loss function consists of the training task $\mathcal{L}_{\it Task}$, the self-optimising task $\mathcal{L}_{KL}$, and the reconstruction task $\mathcal{L}_{\it R}$, given by:
\begin{equation}
\label{eq-final}
    \mathcal{L} = \mathcal{L}_{\it Task} + \gamma\mathcal{L}_{\it KL} + \delta\mathcal{L}_{\it R},
\end{equation}
where $\mathcal{L}_{\it Task}$ is a flexible task-specific loss function,
and $\gamma$ and $\delta$ are two hyper-parameters that we will discuss in Section~\ref{subsec:experimental_setup}.
Note that for link prediction task we have $\mathcal{L} = \mathcal{L}_{\it R} + \gamma\mathcal{L}_{\it KL}$, since $\mathcal{L}_{\it Task}$ equals to $\mathcal{L}_{\it R}$.
Moreover, we will demonstrate the effectiveness of each component of loss function $\mathcal{L}$ in the ablation study of Section~\ref{subsec:experimental_evaluation_and_ablation_study}. 

\begin{algorithm}[!t]
\SetAlgoLined
\KwIn{
	graph $\mathcal{G}=(\mathcal{V}, \mathcal{E}, \mathbf{X})$. 
}
\KwOut{
	node representations $\mathbf{Z}$, graph representations $\mathbf{Z}_{g}$}
    $\mathbf{H} = \mathrm{ReLU}(\hat{\mathbf{D}}^{-\frac{1}{2}} \hat{\mathbf{A}} \hat{\mathbf{D}}^{\frac{1}{2}} \mathbf{X} \mathbf{W})$ \; 
    \For{$t \gets \{1, 2, \dots, T \}$}{
        \For{$v_{i} \gets \{v_{1}, v_{2}, \dots, v_{n} \}$}{
            \For{$v_{j} \in \mathcal{N}^{\lambda}_{i} $}{
                $\phi_{ij} = f^{non}_{\phi}(v_{i}, v_{j}) \times f^{lin}_{\phi}(v_{i}, v_{j})$;
            }
            $\phi_{i} = \frac{1}{\vert\mathcal{N}^{\lambda}_{i}\vert} \sum_{v_{j} \in \mathcal{N}^{\lambda}_{i}} \phi_{ij}$;
        }
        \For{$v_{i} \gets \{v_{1}, v_{1}, \dots, v_{n} \} $}{
            $\hat{N}_{p}=\{v_{i} \mid  \phi_{i} > \phi_{j}, \,  \forall v_{j} \in \mathcal{N}^{1}_{i}\}$;
        }
        $\hat{N}_{r}=\{v_{j} \mid v_{j} \notin c_{\lambda}(v_{i}), \forall v_{i} \in \hat{N}_{c} \}$ \;    
        Generate the super-node formation matrix: $\mathbf{S}_{t}$ \;
        \For{$v_{i} \in \hat{N}_{r}$}{
            $\mathbf{X}_{t}[i] = \mathbf{H}_{t\!-\!1}[i]$ ;
        }
        \For{$v_{i} \in \hat{N}_{p}$}{
            $\mathbf{X}_{t}[i] = \mathbf{H}_{t\!-\!1}[i] + \sum_{v_{j} \in c_{\lambda}(v_{i})\setminus v_{i}} \alpha_{ij} \mathbf{H}_{t\!-\!1}[j]$;
        }
        $\mathbf{A}_{t}= \mathbf{S}^{\mathrm{T}}_{t} \hat{\mathbf{A}}_{t\!-\!1} \mathbf{S}_{t}$ \;
        $\mathbf{H}_{t} = \mathrm{ReLU}(\hat{\mathbf{D}}^{-\frac{1}{2}}_{t} \hat{\mathbf{A}}_{t} \hat{\mathbf{D}}^{\frac{1}{2}}_{t} \mathbf{X}_{t} \mathbf{W}_{t})$ \;
       $\hat{\mathbf{H}}_{t} = (\mathbf{H}_{t}^{T} \mathbf{S}_{t}^{\mathrm{T}} \mathbf{S}_{t\!-\!1}^{\mathrm{T}} \dots \mathbf{S}_{1}^{\mathrm{T}})^{\mathrm{T}}$ \;
    }
    $\mathbf{Z} = \mathbf{H} + \sum^{T}_{t} \boldsymbol{\beta}_{t} \, \hat{\mathbf{H}}_{t}$ \;
    $\mathbf{Z}_{g} = \mathrm{READOUT}(\{\mathbf{Z}, \hat{\mathbf{H}}_{1}, \dots, \hat{\mathbf{H}}_{T} \}) \in \mathbb{R}^{d} $ \;
\caption{\textit{Adaptive Multi-grained GNNs}}
\label{alg:framework_AdamGNN}
\end{algorithm}

\subsection{Algorithm}
\label{subsec:appendix_algorithm}
We have presented the idea of AdamGNN and the design details of each component in Section~\ref{sec:proposed_approach}.
Given a graph $\mathcal{G}$, we first apply a primary GNN encoder to generate the primary node embedding (line $1$). 
Then we construct a multi-grained structure with $t$-th level (line $3$-$13$) with the proposed adaptive graph pooling operator. 
Meanwhile, we also propose a method to define the initial features of pooled super nodes (line $14$-$19$). 
The graph connectivity of the pooled graph is maintained by line $20$. 
We apply GNN encoder on the pooled graph to summarise the relationships between super nodes (line $21$) to learn macro grained semantics of $t$-th granularity level.
The learned multi-grained semantics will be further distributed to the original graph follows an unpooling operator (line $22$). 
Last, the \textit{flyback} aggregator generates the meso/macro level knowledge from different levels as the node representations of $\mathcal{G}$ (line $24$), and additional $\mathrm{READOUT}$ operators~\cite{CCBLV20} produce the node representations as to the graph representation (line $25$).

\smallskip\noindent
\textbf{Model scalability.}
According to the design for AdamGNN framework, we can find that the primary node representation learning module of each level and the adaptive graph pooling and unpooling operators are categorised as a local network algorithm~\cite{T16}, which only involves local exploration of the network structure. 
Therefore, our design enables AdamGNN to scale to representation learning on large-scale networks and to be friendly to distributed computing settings~\cite{QCDZYDWT20}. 
We present instances that utilise multi-GPU computing framework to accelerate the training process of AdamGNN in Section~\ref{subsec:more_model_analysis}. 

\section{Experiments} 
\label{sec:experiments}
\subsection{Experimental Setup}
\label{subsec:experimental_setup}
We evaluate our proposed model, AdamGNN, on $14$ benchmark datasets, and compare with $16$ competing methods over both node- and graph-wise tasks, including node classification, link prediction and graph classification.

\begin{table}[!ht]
\caption{Data statistics for node-wise tasks and the split for the semi-supervised node classification task.
N.A. means a dataset does not contain node attributes or does not support semi-supervised settings.
}
\label{table:summary_micro_datasets}
\centering
\resizebox{1.\linewidth}{!}{
\begin{tabular}{l | r | r | r | r | r}
            \hline
Dataset     & \#Nodes   & \#Edges   & \#Features    & \#Classes & \#Train-\#Val-\#Test \\
            \hline
ACM         & 3,025     & 13,128    & 1,870         & 3         & 60-500-1,000      \\
Citeseer    & 3,327     & 4,552     & 3,703         & 6         & 120-500-1,000      \\
Cora 	    & 2,708     & 5,278     & 1,433         & 7         & 140-500-1,000     \\	
DBLP        & 4,057     & 3,528     & 334           & 4         & 80-500-1,000      \\
Emails      & 799       & 10,182    & N.A.          & 18        & 180-309-310      \\
Pubmed      & 19,717    & 88,648    & 500           & 3         & 60-500-1,000      \\
Wiki        & 2,405     & 1,2178    & 4973          & 17        & 481-962-962    \\
Ogbn-arxiv  & 169,343   & 1,166,243	& 128           & 40        & N.A.    \\
\hline
\end{tabular}
}
\vspace{-3mm}
\end{table}

\begin{table}[!ht]
\caption{Data statistics for graph classification.
}
\label{table:summary_macro_datasets}
\centering
\resizebox{\linewidth}{!}{
\begin{tabular}{l | r | r | r | r | r}
                    \hline
Dataset         & \#Graphs  & \#Nodes (avg) & \#Edges (avg) & \#Features    & \#Classes    \\
                    \hline
NCI1            & 4,110     & 29.87         & 32.3          & 37            & 2             \\
NCI109          & 4,127     & 29.68         & 32.13         & 38            & 2             \\
D\&D            & 1,178     & 284.32        & 715.66        & 89            & 2             \\
MUTAG           & 188       & 17.93         & 19.79         & 7             & 2             \\
Mutagenicity    & 4,337     & 30.32         & 30.77         & 14            & 2             \\
PROTEINS        & 1,113     & 39.06         & 72.82         & 32            & 2             \\
\hline
\end{tabular}
}
\vspace{-3mm}
\end{table}

\smallskip\noindent
\textbf{Datasets}.
To validate the effectiveness of our model on real-world applications, we adopt datasets come from different domains with different topics and relations.
We use $8$ datasets for node-wise tasks (data statistics are summarised in Table~\ref{table:summary_micro_datasets}). 
Ogbn-arxiv~\cite{HFZDRLCL20}, ACM~\cite{BWSZLC20}, Cora~\cite{KW17}, Citeseer~\cite{KW17} and Pubmed~\cite{KW17} are paper citation graph datasets. 
DBLP~\cite{BWSZLC20} is an author graph dataset from the DBLP dataset. 
Emails~\cite{snapnets} is an email communication graph dataset. 
Wiki~\cite{YLZSC15} is a webpage graph dataset.

For the graph classification task, we adopt $6$ bioinformatics datasets~\cite{YJ20}
(data statistics are summarised in Table~\ref{table:summary_macro_datasets}).
D\&D and PROTEINS are dataset containing proteins as graphs.
NCI1 and NCI109 involve anticancer activity graphs.
The MUTAG and Mutagenicity consist of chemical compounds divided into two classes according to their mutagenic effect on a bacterium. 
Note that all datasets can be downloaded with our published code automatically. 

\smallskip\noindent
\textbf{Competing methods}.
For node-wise tasks, we adopt $10$ competing methods that include $7$ GNN models with flat message-passing mechanism, and $2$ state-of-the-art methods that contain a hierarchical structure:
MLP~\cite{H99}, 
GCN~\cite{KW17},
GraphSAGE~\cite{HYL17},
GAT~\cite{VCCRLB18},
GIN~\cite{XHLJ19},
PNA~\cite{CCBLV20},
\textsc{GCNII}~\cite{CWHDL20},
\textsc{GRAND}~\cite{FZDHLXYKT20},
GXN~\cite{LCZT20}
and \textsc{g-U-Net}~\cite{GJ19}.
%
For the graph classification task, except for GIN, PNA, GXN and \textsc{g-U-Net} which support graph-wise tasks, we adopt extra $7$ competing methods that involve the state-of-the-art models:
3WL-GNN~\cite{MHSL19},
\textsc{SortPool}~\cite{ZCNC18},
\textsc{DiffPool}~\cite{YYMRHL18},
\textsc{SagPool}~\cite{LLK19},
\textsc{EigentPool}~\cite{MWAT19},
\textsc{StructPool}~\cite{YJ20}
and ASAP~\cite{RST20}.
Note that we already carefully discussed these competing methods in Section~\ref{sec:related_work}; therefore, we do not repeat the method description to save page space. 
The competing model implementations can be found in our published project on Github.

\smallskip\noindent
\textbf{Evaluation settings}.
For the \textit{node-wise} tasks, we follow the \textit{supervised node classification} (Sup-NC) settings of PNA~\cite{CCBLV20}, i.e., using two sets of $10\%$ labelled nodes as validation and test sets, with the remaining $80\%$ labelled nodes used as the training set.
Meanwhile, we follow the \textit{semi-supervised node classification} (Semi-NC) settings of GCN~\cite{KW17}, and the data split is shown in Table~\ref{table:summary_micro_datasets}. 
That said, for Cora, Citeseer and Pubmed, we use the fixed splits and for other datasets, we randomly assign $20$ labelled nodes for each class for training, and $500$ and $1000$ nodes for validation and testing, respectively.
Note that since the Email does not have a sufficient number of nodes for classic Semi-NC setting, we choose $10$ labelled nodes for each class for training, and the rest data is evenly separated as validation and test sets.
Wiki is imbalanced, where some classes only have very few labelled nodes, e.g., class $12$ has $9$ labelled nodes and class $4$ has $10$ labelled nodes, which cannot support Semi-NC settings.
Therefore, we follow Sup-NC settings to split Wiki for the Semi-NC experimental parts but use only $20\%$ labelled nodes for training and the remaining nodes for validation and testing, respectively. 
Ogbn-arxiv follows the fixed split of OGB leaderboard~\cite{HFZDRLCL20}.
For the \textit{Link Prediction} (LP) task, we follow the settings of~\cite{YYL19}, i.e., using two sets of $10\%$ existing edges as validation and test sets, with the remaining $80\%$ edges used as the training set.
Note that, an equal number of non-existent links are randomly sampled and used for every set.
We present the average performance of $10$ times experiments with random seeds. 
The AUC score evaluates link prediction, and node classification tasks are evaluated by accuracy.
We conduct the experiments with random parameter initialisation with $10$ random seeds and report the average performance.

For the \textit{graph-wise} task, i.e., \textit{graph classification} (GC) task, we perform all experiments following the pooling pipeline of \textsc{SagPool}~\cite{LLK19}.
$80\%$ of the graphs are randomly selected as training, and the rest two $10\%$ graphs are used for validation and testing, respectively.
We conduct the experiments using $10$-fold cross-validation and report the average classification accuracy on $10$ random seeds.

\smallskip\noindent
\textbf{Model configuration}. 
For all methods, we set the embedding dimension $d=64$ and utilise the same learning rate $=0.01$, Adam optimiser, number of training epochs $=1000$ with early stop ($100$). 
In terms of the neural network layers, we report the one with better performance of GCNII with better performance among $\{8, 16, 32, 64, 128\}$; for other models, we report the one with better performance between $2\!-\!4$;
For all models with hierarchical structure (including AdamGNN), we use GCN as the GNN encoder for fair comparision. 
In terms of the number of levels that is required by hierarchical models, we present the one with better performance, between $2\!-\!5$.
On other hyper-parameter settings of competing methods, we employ the default values of each competing method as shown in the paper's official implementation. 
Particularly, for AdamGNN, by tuning the hyper-parameters based on the validation set, 
we have $\gamma = 0.1$ and $\delta = 0.01$ for Eq.~\ref{eq-final} for the experiments to let loss values lie in a reasonable range, i.e., $(0, 10)$. 
%
We employ Pytorch and PyTorch Geometric to implement all models.
Experiments were conducted with GPU (NVIDIA Tesla V100) machines.

\subsection{Experimental Evaluation and Ablation Study}
\label{subsec:experimental_evaluation_and_ablation_study}

\begin{table*}[!ht]
\caption{Results in accuracy (\%) for supervised and semi-supervised node classification on eight datasets.
\textsuperscript{\ddag} indicates the results from OGB leaderboard~\cite{HFZDRLCL20}.
The bold numbers represent the top-$2$ results.
}
\label{table:nc_results}
\centering
\resizebox{\linewidth}{!}{
\begin{tabular}{l | c  c | c  c | c  c | c  c | c  c | c  c | c  c | c }
\hline
Models          & \multicolumn{2}{c|}{ACM}  & \multicolumn{2}{c|}{Citeseer} & \multicolumn{2}{c|}{Cora} & \multicolumn{2}{c|}{Emails}   & \multicolumn{2}{c|}{DBLP} & \multicolumn{2}{c|}{Pubmed} & \multicolumn{2}{c}{Wiki} & Ogbn-arxiv                    \\ \hline
                                &Sup&Semi       &Sup&Semi       &Sup&Semi               &Sup&Semi       &Sup&Semi & Sup & Semi                                    & Sup & Semi & Sup                              \\ 
\hline
MLP~\cite{H99}                  &87.08&76.14    &70.87&63.27    &76.12&57.82            &N.A.&N.A.      &79.17&66.55     &83.41&71.83    &20.42&17.46 & $55.50^{\ddag}$                          \\
GCN~\cite{KW17}                 &92.25&86.14    &76.13&71.13    &88.90&81.50            &85.03&77.32    &82.68&72.22     &86.04&79.02    &57.36&46.19 & $71.74^{\ddag}$                          \\
GraphSAGE~\cite{HYL17}          &92.48&87.18    &76.75&71.19    &\textbf{88.92}&81.17            &85.80&78.19    &83.20&72.20     &86.01&79.14    &57.24&49.21 & $71.49^{\ddag}$                          \\
GAT~\cite{VCCRLB18} 	        &91.69&87.52    &\textbf{76.96}&72.43    &88.33&83.00   &84.67&77.35    &84.04&73.24     &86.21&79.01    &58.07&50.27 & 72.06                             \\	
GIN~\cite{XHLJ19}               &90.66&85.98    &76.39&70.27    &87.74&82.19            &87.18&79.23    &82.54&73.42     &87.10&79.11    &66.29&49.88 & 71.76                          \\
PNA~\cite{CCBLV20}              &\textbf{93.97}&\textbf{89.81}    &72.67&67.81    &88.78&79.54            &81.25&73.23     &\textbf{86.21}&72.40    &87.63&76.61    &21.67&19.58 & 72.37                           \\
GCNII~\cite{CWHDL20}              &93.05&87.80    &76.37&\textbf{73.37}    &88.07&\textbf{85.43}            &82.51&72.90     &84.48&72.82    &84.48&80.01    &60.24&\textbf{56.18} & $\textbf{72.74}^{\ddag}$                           \\
GRAND~\cite{FZDHLXYKT20}              &93.05&86.90    &76.88&\textbf{75.25}    &87.82&\textbf{85.41}            &52.53&71.94     &85.47&70.46    &86.63&\textbf{82.06}    &59.58&27.94 & 70.97                           \\
\hline
\textsc{g-U-Net}~\cite{GJ19}   &93.42&89.34    &75.59&72.97   &87.68&84.40           &\textbf{89.16}&\textbf{80.48}                     &85.27&\textbf{73.86}                            &\textbf{87.67}&79.06  &\textbf{71.33}&54.18 & 71.78                          \\
GXN~\cite{LCZT20}   &92.95&87.70    &75.93&71.07   &86.90&83.10           &88.68&77.91                     &84.66&71.27                            &86.02&78.01  &68.90&51.89 & 70.02                          \\
AdamGNN         &\textbf{94.37}&\textbf{90.72}    &\textbf{78.92}&73.03   &\textbf{90.92}& 84.66  &\textbf{91.88}&\textbf{83.23}   &\textbf{88.36}&\textbf{74.60}      &\textbf{89.81}&\textbf{80.48}   &\textbf{73.37}& \textbf{62.06} & \textbf{72.65}  \\
\hline
\end{tabular}
}
\vspace{-3mm}
\end{table*}

\smallskip\noindent
\textbf{Performance on node-wise tasks}.
We compare AdamGNN with $7$ GNN models and one pooling-based model, i.e., \textsc{g-U-Net},
since other pooling approaches do not provide an unpooling operator and thus cannot support node-wise tasks.
Results on node classification (with supervised and semi-supervised settings) are summarised in Table~\ref{table:nc_results}.
They show that AdamGNN can outperform most competing methods with up to $10.47\%$ and $5.39\%$ improvements on semi-supervised and supervised settings, respectively.
AdamGNN brings the most significant improvement in Wiki data with semi-supervised settings, and the competing method that only adopts node features, i.e., MLP, achieve terrible accuracy, $17.46\%$.
We argue that because the node features and node labels are weakly correlated in this dataset, multi-grained semantics provided by AdamGNN help to ameliorate the performance.

\begin{table*}[!ht]
\caption{Results in AUC for link prediction on seven datasets.
}
\label{table:lp_results}
\centering
\begin{tabular}{l | c | c | c | c | c | c | c}
            \hline
Models                          & ACM & Citeseer & Cora & Emails & DBLP & Pubmed & Wiki          \\ \hline
GCN~\cite{KW17}                 & 0.975 & 0.887 & 0.918 & 0.930 & 0.904 & 0.941  & 0.523         \\
GraphSAGE~\cite{HYL17}          & 0.972 & 0.884 & 0.908 & 0.923 & 0.889 & 0.905  & 0.577         \\
GAT~\cite{VCCRLB18}             & 0.968 & 0.910 & 0.912 & 0.930 & 0.889 & 0.947  & 0.594         \\	
GIN~\cite{XHLJ19}               & 0.787 & 0.808 & 0.878 & 0.859 & 0.820 & 0.927  & 0.501         \\
PNA~\cite{CCBLV20}              & 0.978 & 0.974 & 0.731 & 0.932 & 0.908 & 0.896  & 0.538        \\
GCNII~\cite{CWHDL20}            & 0.968 & 0.969 & 0.871 & 0.926 & 0.890 & 0.933  & 0.709      \\
\hline
\textsc{g-U-Net}~\cite{GJ19}   & 0.890 & 0.918     & 0.932 & 0.936     & 0.934 & 0.962  & 0.734         \\
GXN~\cite{LCZT20}   & 0.862 & 0.894     & 0.909 & 0.915     & 0.911 & 0.934  & 0.516         \\
AdamGNN                         & \textbf{0.988}    & \textbf{0.975}    & \textbf{0.948}    & \textbf{0.957}  & \textbf{0.965}  & \textbf{0.969} & \textbf{0.920}   \\
\hline
\end{tabular}
\vspace{-3mm}
\end{table*}

Link prediction results in Table~\ref{table:lp_results} show that AdamGNN can significantly outperform the $7$ competing methods by up to $25.3\%$ improvement in terms of AUC.
It indicates the versatility of AdamGNN on different node-wise tasks and exhibits the usefulness of modelling multi-grained semantics into node representations.
Similar to node classification task, AdamGNN again brings the most significant AUC improvement on the Wiki dataset, i.e., achieving $29.76\%$ improvement compared with the \textit{flat} GNN models.

\begin{table*}[!ht]
\caption{Results in accuracy (\%) for graph classification on six datasets.
}
\label{table:gc_results}
\centering
\begin{tabular}{l  c  c  c  c  c  c}
\hline
Models                              & NCI1              & NCI109            & D\&D              & MUTAG         & Mutagenicity        & PROTEINS     \\
\hline
GIN~\cite{XHLJ19}                   & 76.17$\pm$1.12    & 77.31$\pm$1.42    & 78.05$\pm$1.89    & 75.11$\pm$2.64 & 77.24$\pm$2.26                          & 75.37$\pm$1.62 \\
3WL-GNN~\cite{MHSL19}               & 79.38$\pm$1.73    & 78.34$\pm$1.90    & 78.32$\pm$2.44    & 78.34$\pm$3.39 & 81.52$\pm$2.23                          & 77.92$\pm$2.09 \\
PNA~\cite{CCBLV20}                  & 78.96$\pm$1.01    & 79.06$\pm$1.15    & 75.47$\pm$2.52    & \textbf{81.91}$\pm$2.59 & 81.72$\pm$1.46 & 77.72$\pm$2.25 \\
\hline
\textsc{SortPool}~\cite{ZCNC18}     & 72.25$\pm$1.33    & 73.21$\pm$2.21    & 73.31$\pm$2.43    & 71.47$\pm$2.31    & 74.65$\pm$3.35                          & 70.49$\pm$2.37 \\
\textsc{DiffPool}~\cite{YYMRHL18}   & 76.47$\pm$0.96    & 76.17$\pm$1.11    & 76.16$\pm$1.69    & 73.61$\pm$3.94    & 76.30$\pm$0.37                   & 71.90$\pm$2.75 \\
\textsc{g-U-Net}~\cite{GJ19}       & 77.56$\pm$1.92    & 77.02$\pm$2.30    & 73.98$\pm$2.63    & 76.60$\pm$5.03    & 78.64$\pm$3.11                   & 72.94$\pm$3.68 \\
\textsc{SagPool}~\cite{LLK19} 	    & 75.76$\pm$1.29    & 73.67$\pm$2.32    & 76.21$\pm$1.56    & 75.27$\pm$1.92    & 77.09$\pm$1.17                      & 75.27$\pm$0.57 \\	
\textsc{EigenPool}~\cite{MWAT19}    & 77.54$\pm$1.82    & 77.20$\pm$1.81    & 78.25$\pm$1.78    & 76.21$\pm$2.74    & 78.60$\pm$1.24                   & 75.19$\pm$1.95  \\
GXN~\cite{LCZT20}     & 74.18$\pm$2.20    & 71.39$\pm$1.90    & 74.92$\pm$2.04    & 74.10$\pm$4.36    & 75.53$\pm$1.83                   & 72.26$\pm$2.24 \\
\textsc{StructPool}~\cite{YJ20}     & 77.61$\pm$1.08    & 78.39$\pm$1.23    & 80.10$\pm$1.77    & 77.13$\pm$3.93    & 80.94$\pm$1.67                   & 78.84$\pm$1.70 \\
ASAP~\cite{RST20}                   & 78.21$\pm$1.75    & 78.16$\pm$1.62    & 79.50$\pm$1.80    & 80.17$\pm$1.77    & 81.52$\pm$1.24                   & \textbf{78.92$\pm$1.45} \\
AdamGNN                             & \textbf{79.77$\pm$1.29} & \textbf{79.36$\pm$1.03}    & \textbf{81.51$\pm$1.56} & 80.11$\pm$2.58   & \textbf{82.04$\pm$1.73}          & 77.04$\pm$0.78 \\
\hline
\end{tabular}
\vspace{-3mm}
\end{table*}

\smallskip\noindent
\textbf{Performance on graph-wise task}.
Experimental results are summarised in Table~\ref{table:gc_results}.
It is apparent that our AdamGNN achieves the best performance on $4$ of the $6$ datasets, and consistently outperforms most of competing pooling-based techniques by $1.76\%$ improvement.
For the datasets MUTAG and PROTEINS, our results are still competitive since PNA and ASAP only slightly outperform AdamGNN. 
Nevertheless, AdamGNN is still better than other baselines.
This is because our model involves adaptive pooling and unpooling operators to update node- and graph-wise information interactively, and further enhance the representations of nodes and graphs during the training process.

\begin{table}[!ht]
\caption{Comparison of AdamGNN with different loss functions on three tasks. 
NC task follows the supervised settings.
}
\label{table:ablation_analysis_loss_func}
\centering
\begin{tabular}{l  c  c  c  c}
\hline
             & DBLP  & Citeseer  & Mutagenicity     \\
            & (LP)    & (NC)        & (GC)                        \\
\hline
AdamGNN w/ $\mathcal{L}_{\it Task}$ & 0.956         & 76.63             & 79.04                     \\
AdamGNN w/ $\mathcal{L}_{\it Task}$+$\mathcal{L}_{\it KL}$   & -             & 77.17             & 78.94                     \\
AdamGNN w/ $\mathcal{L}_{\it Task}$+$\mathcal{L}_{\it R}$ 	& -             & 77.64             & 80.65                     \\	
AdamGNN (Full model)        & \textbf{0.965}  & \textbf{78.92}    & \textbf{82.04}            \\
\hline
\end{tabular}
\vspace{-3mm}
\end{table}

\smallskip\noindent
\textbf{Ablation study of different loss functions}.
The loss function of our AdamGNN consists of 
three parts, i.e., $\mathcal{L}_{\it Task}$, $\mathcal{L}_{\it R}$ and $\mathcal{L}_{\it KL}$.
We examine how each part contributes to the performance.
Table~\ref{table:ablation_analysis_loss_func} provides the results.
For the link prediction task, 
we have $\mathcal{L} = \mathcal{L}_{R} + \gamma\mathcal{L}_{\it KL}$,
since $\mathcal{L}_{\it Task}$ equals to $\mathcal{L}_{R}$.
Thus, two comparison experiments are missing in link prediction.
From the results, we can see that $\mathcal{L}_{R}$ can significantly improve the performance over three tasks. This is because it can eliminate the over-smoothing problem caused by the received messages from different granularity levels.
Meanwhile, $\mathcal{L}_{KL}$ can slightly improve the results of node-wise tasks as well.

\begin{table}[!ht]
\caption{Comparison of AdamGNN with and without \textit{flyback} aggregation in terms of graph classification accuracy on NCI1, NCI109 and Mutagenicity datasets.
}
\label{table:abaltion_analysis_flyback}
\centering
\begin{tabular}{l  c  c  c }
\hline
AdamGNN                         & NCI1          & NCI109   & Mutagenicity     \\
\hline
No \textit{flyback} aggregation & 75.54         & 77.49             & 79.89                     \\
Full model                      & \textbf{79.77}& \textbf{79.36}    & \textbf{82.04}            \\
\hline
\end{tabular}
\vspace{-3mm}
\end{table}

\smallskip\noindent
\textbf{Ablation study of the \textit{flyback} aggregation}.
Experimental results of node-wise tasks confirm that capturing multi-grained semantics in AdamGNN can help to learn more meaningful node representations.
Here, we further study whether \textit{flyback} aggregator can improve graph representations.
Specifically, we aim to see how the \textit{flyback} aggregator contributes to graph classification performance by removing and keeping it. 
The results are summarised in Table~\ref{table:abaltion_analysis_flyback}.
It is clear that the node representations enhanced by the \textit{flyback} aggregation can indeed improve the graph representation in the classification task. 

\begin{table}[!ht]
\caption{Comparison of AdamGNN with different number of granularity levels in terms of different tasks.
NC task follows the supervised settings.
}
\label{table:abaltion_analysis_levels}
\centering
\resizebox{\linewidth}{!}{
\begin{tabular}{l  c  c  c  c  c  c}
\hline
\# Levels ($T$)   & DBLP & Wiki     & ACM     & Citeseer   & Emails     & Mutagenicity     \\
\hline
                & LP            & LP                & NC                & NC                                & NC            & GC                        \\
\hline
1               & 0.951         & 0.912             & 92.60             & 77.68                             & 86.83         & 78.16                     \\
2               & 0.958         & 0.913             & 93.38             & 74.67                             & \textbf{91.88}& \textbf{82.04}            \\
3 	            & 0.959         & 0.917             & \textbf{94.37}    & 76.15                             & 90.61         & 81.58                     \\	
4               & \textbf{0.965}& \textbf{0.920}    & 90.84             & \textbf{78.92}                     & -             & 81.01                     \\
\hline
\end{tabular}
}
\vspace{-3mm}
\end{table}

\smallskip\noindent
\textbf{Ablation study of number of granularity levels}.
As it has been proved that the existing GNN models will have worse performance when the neural network goes deeper~\cite{LHW18},
here we examine how AdamGNN can be benefited from more granularity levels.
By varying the number of granularity levels, we report the performance of AdamGNN on link prediction, supervised node classification and graph classification, as summarised in Table~\ref{table:abaltion_analysis_levels}.
We can observe that increasing the number of granularity levels can improve both link prediction and node classifications. 
As for graph classification, $2$ levels would be a proper choice.

\begin{table}[!ht]
\caption{Comparison of AdamGNN with different primary GNN encoder, follow the semi-supervised node classification settings. 
}
\label{table:ablation_analysis_primary_encoder}
\centering
\begin{tabular}{l  c  c  c  c}
\hline
Models & Cora  & Citeseer  & Pubmed     \\
\hline
GCN                 & 81.50 & 71.13 & 79.02 \\
AdamGNN w/ GCN      & \textbf{84.66} & \textbf{73.03} & \textbf{80.48} \\
\hline
PNA                 & 79.54 & 67.81 & 76.61 \\
AdamGNN w/ PNA      & \textbf{81.21} & \textbf{70.90} & \textbf{78.73} \\
\hline
GCNII 	            & 85.43 & 73.37 & 80.01 \\	
AdamGNN w/ GCNII 	& \textbf{85.81} & \textbf{74.13} & \textbf{81.37} \\	
\hline
\end{tabular}
\vspace{-3mm}
\end{table}

\smallskip\noindent
\textbf{Ablation study of different primary GNN encoders}.
We adopted GCN as the default primary GNN encoder in model presentation (Section~\ref{sec:proposed_approach}) and previous experiments. 
Here, we present more experimental results by endowing AdamGNN with advanced GNN encoders in Table~\ref{table:ablation_analysis_primary_encoder}. 
The table demonstrates that advanced GNN encoders can still benefit from the multi-grained semantics of AdamGNN. 
For instance, GCNII can stack lots of layers to capture long-range information; however, it still follows a \textit{flat} message-passing mechanism hence naturally ignoring the multi-grained semantics.
AdamGNN further ameliorates this problem for better performance.

\begin{figure}[!ht]
\centering
\includegraphics[width=.98\linewidth]{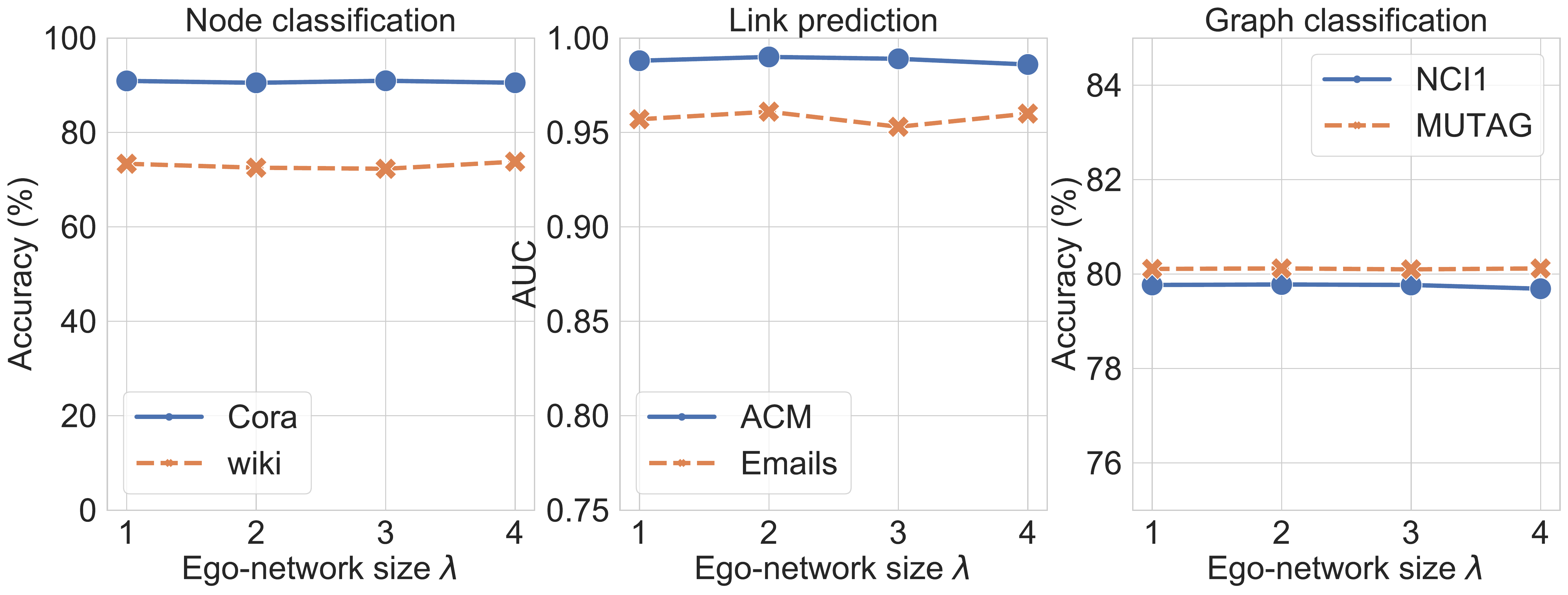}
\caption{
Ablation study of Ego-network size $\lambda$ in terms of different tasks. 
NC task follow the supervised settings. 
}
\label{fig:different_lambda}
\vspace{-3mm}
\end{figure}

\smallskip\noindent
\textbf{Ablation study of Ego-network size ($\lambda$)}.
The size of an ego-network as defined in Section~\ref{sec:proposed_approach} is captured by $\lambda$. 
We present an ablation study to investigate the influence of $\lambda$ on AdamGNN's performance, results are summarised in Figure~\ref{fig:different_lambda}. 
The figure indicates that $\lambda$ has no significant influence on the model performance. 
We simply adopt $\lambda\!=\!1$ throughout the paper.

\subsection{More Model Analysis}
\label{subsec:more_model_analysis}

\noindent
\textbf{Running time comparison}.
We present the average epoch training time of different node and graph classification models in Table~\ref{table:nc_running_time} and Table~\ref{table:gc_running_time}, respectively.
In terms of node classification task, AdamGNN requires more training time due to the computation cost of $\alpha_{ij}$ and $\boldsymbol{\beta}_{t}$, similar to any attention-mechanism enhanced models~\cite{VSPUJGKP17}. 
However, AdamGNN is designed as a local network algorithm, maintaining good scalability; hence it can be easily accelerated by mini-batch and multi-GPUs computing frameworks~\cite{KSOISLC21}.
It will significantly mitigate the computational issues. 
On the other hand, AdamGNN follows the sparse design similar to \textsc{SagPool}, ASAP, striking a balance between performance improvement and maintaining proper time efficiency. 
\textsc{DiffPool} and \textsc{StructPool} employ a \textit{dense} mechanism that is not easily scalable to larger graphs~\cite{CVJKL18}, and \textsc{g-U-Net} uses convolution operations, which bring additional computation cost, to distribute the received information to the graph. 

\begin{table}[!ht]
\caption{Average one epoch running time (in seconds) for supervised node classification task.
($\times$2) means accelerated by 2 GPUs~\cite{KSOISLC21}. 
}
\label{table:nc_running_time}
\centering
\begin{tabular}{l  c  c  c }
\hline
Models              & ACM    & Citeseer   & Cora  \\
\hline
GCN                 & 0.008  & 0.008      & 0.009      \\
GAT                 & 0.010  & 0.011      & 0.011      \\
GCNII               & 0.045  & 0.040      & 0.041      \\
\textsc{g-U-Net}   & 0.076  & 0.064      & 0.069      \\
\hline
AdamGNN ($\times$1) & 0.087  & 0.072      & 0.074      \\
AdamGNN ($\times$2) & 0.059  & 0.050      & 0.052      \\
AdamGNN ($\times$3) & 0.050  & 0.047      & 0.048      \\
\hline
\end{tabular}
\vspace{-3mm}
\end{table}

\begin{table}[!ht]
\caption{Average one epoch running time (in seconds) for graph classification task.
}
\label{table:gc_running_time}
\centering
\begin{tabular}{l  c  c  c }
\hline
Models              & NCI1  & NCI109    & PROTEINS  \\
\hline
\textsc{DiffPool}   & 6.23  & 3.22      & 3.65      \\
\textsc{SagPool}    & 1.95  & 1.55      & 0.45      \\
\textsc{g-U-Net}   & 4.58  & 4.45      & 1.46      \\
\textsc{EigenPool}  & 3.88  & 3.54      & 1.38      \\
ASAP                & 2.04  & 1.83      & 1.09      \\
\textsc{StructPool} & 6.31  & 6.04      & 1.34      \\
\hline
AdamGNN             & 3.62  & 3.24      & 1.03      \\
\hline
\end{tabular}
\vspace{-3mm}
\end{table}

\begin{figure}[!ht]
\centering
\includegraphics[width=.95\linewidth]{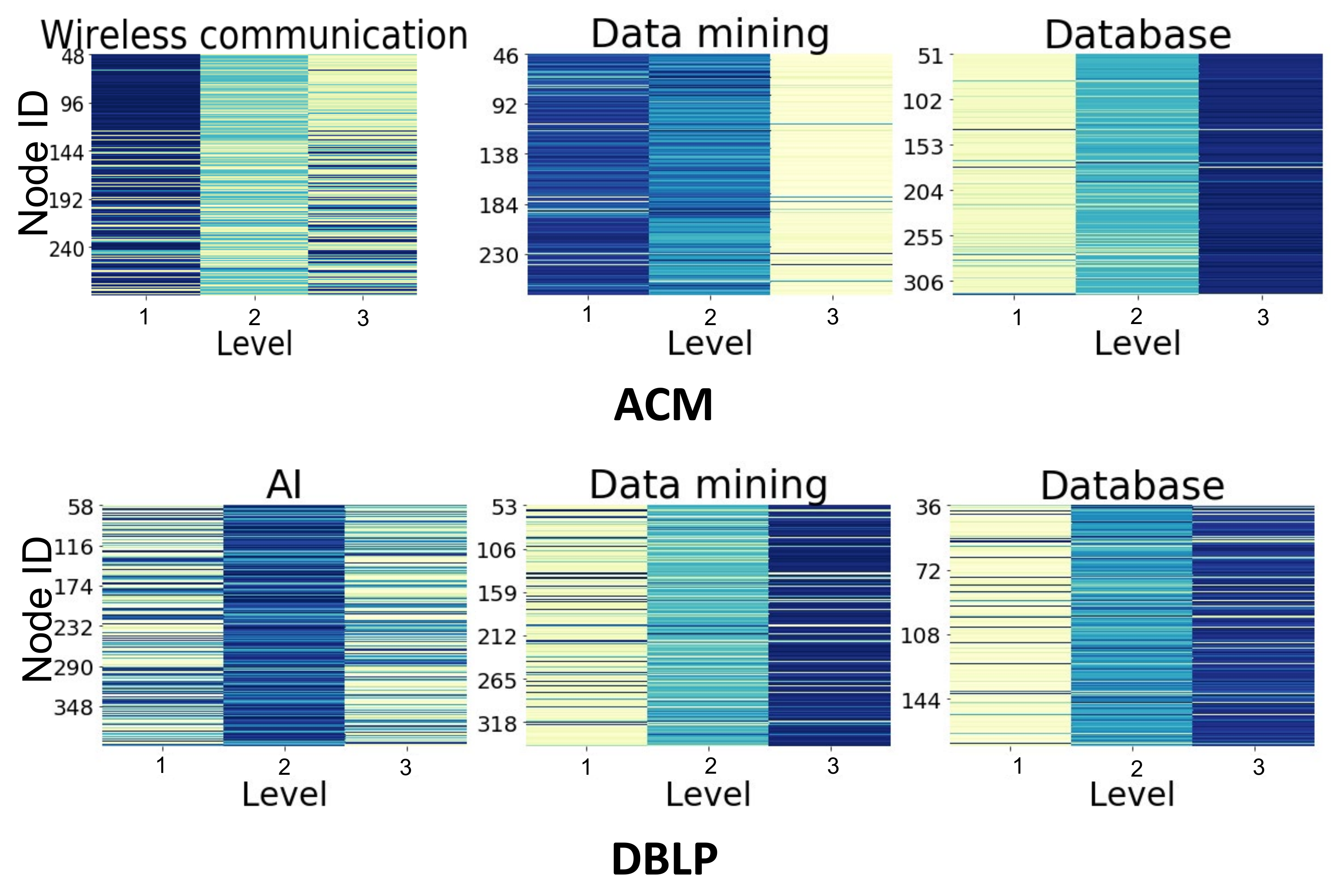}
\caption{
Visualisation of attention weight for messages at different granularity levels.
Dark colours indicate higher weights. 
}
\label{fig:attention_analysis}
\vspace{-3mm}
\end{figure}

\smallskip\noindent
\textbf{Messages from different levels}.
We aim to figure out the importance of received messages from different granularity levels since messages from different levels contain the meso/macro-level knowledge encoded by super nodes.
Here we consider the node classification on the ACM and DBLP as an example.
Specifically, ACM's paper nodes are labelled with $3$ topics: database (DB), data mining (DM) and wireless communication (WC);
DBLP's author nodes have $4$ research areas: AI, DB, DM and computer vision.
The node classification task on these two datasets is predicting the paper/scholar's research area.
The attention scores of nodes that highlight the importance of different levels' messages are plotted in Figure~\ref{fig:attention_analysis}.
We can find different distributions of attention weights over different granularity levels for various areas' classifications.
The relatively general topics, i.e., AI and WC, receive messages from different levels with relatively indistinguishable weights, i.e., higher attention scores of nodes are distributed across levels.
The DM topic in two datasets has different attention patterns: it receives messages from level $1$ with the greatest attention in ACM but receives greatest attention messages from level $3$ in DBLP.
This is because DM is not closely related to the other two topics of ACM dataset,
Scholars of DM-related papers are less possible collaborate with researchers from DB or WC.
DM papers are close to each other in the network. 
Thus DM papers only need to receive level $1$ granularity semantic information summarised from neighbouring nodes.
In contrast, DBLP's other $3$ topics are close to DM.
DM-related scholars may cite any other scholars' papers, and information related to DM is scattered over author nodes in DBLP network.
Therefore, DM researchers tend to be characterised by level $3$ semantics from a wide range.

\begin{figure}[!ht]
\centering
\includegraphics[width=.95\linewidth]{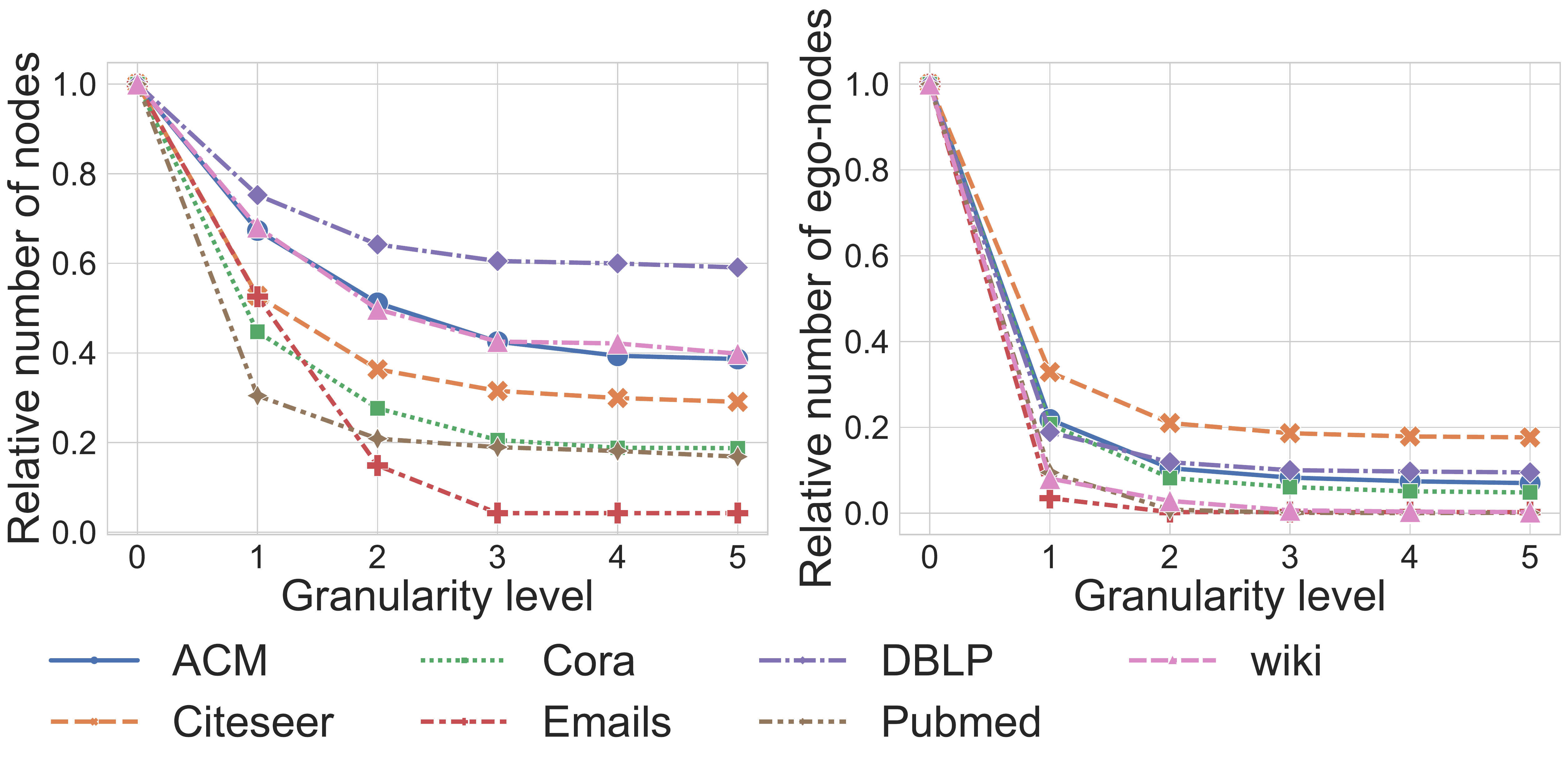}
\caption{
Visualisation of different granularity levels.
}
\label{fig:different_levels}
\vspace{-3mm}
\end{figure}

\smallskip\noindent
\textbf{Visualisation of different granularity levels}. 
To better understand the process of learning multi-grained semantics, we report the relative numbers of nodes (i.e., node ratio concerning the original graph) at different granularity levels generated by AdamGNN in $7$ datasets, as shown in Figure~\ref{fig:different_levels}.
In particular, we set the max number of granularity levels as $5$, and level $0$ indicates the original graph.
We train AdamGNN with semi-supervised node classification and report the relative number of nodes and selected ego-nodes at each level.
In the right, we can find out that the number of ego-nodes can stay stable after $1\!-\!2$ times of our adaptive pooling which indicates that AdamGNN can effectively find a compact structure that contains multi-grained semantics.
In the left, we can see that the number of nodes of each level stabilises after $3\!-\!4$ times of our adaptive pooling which illustrates AdamGNN will maintain the graph size at a proper level to avoid dense super graph.

\begin{figure}[!ht]
\centering
\includegraphics[width=.85\linewidth]{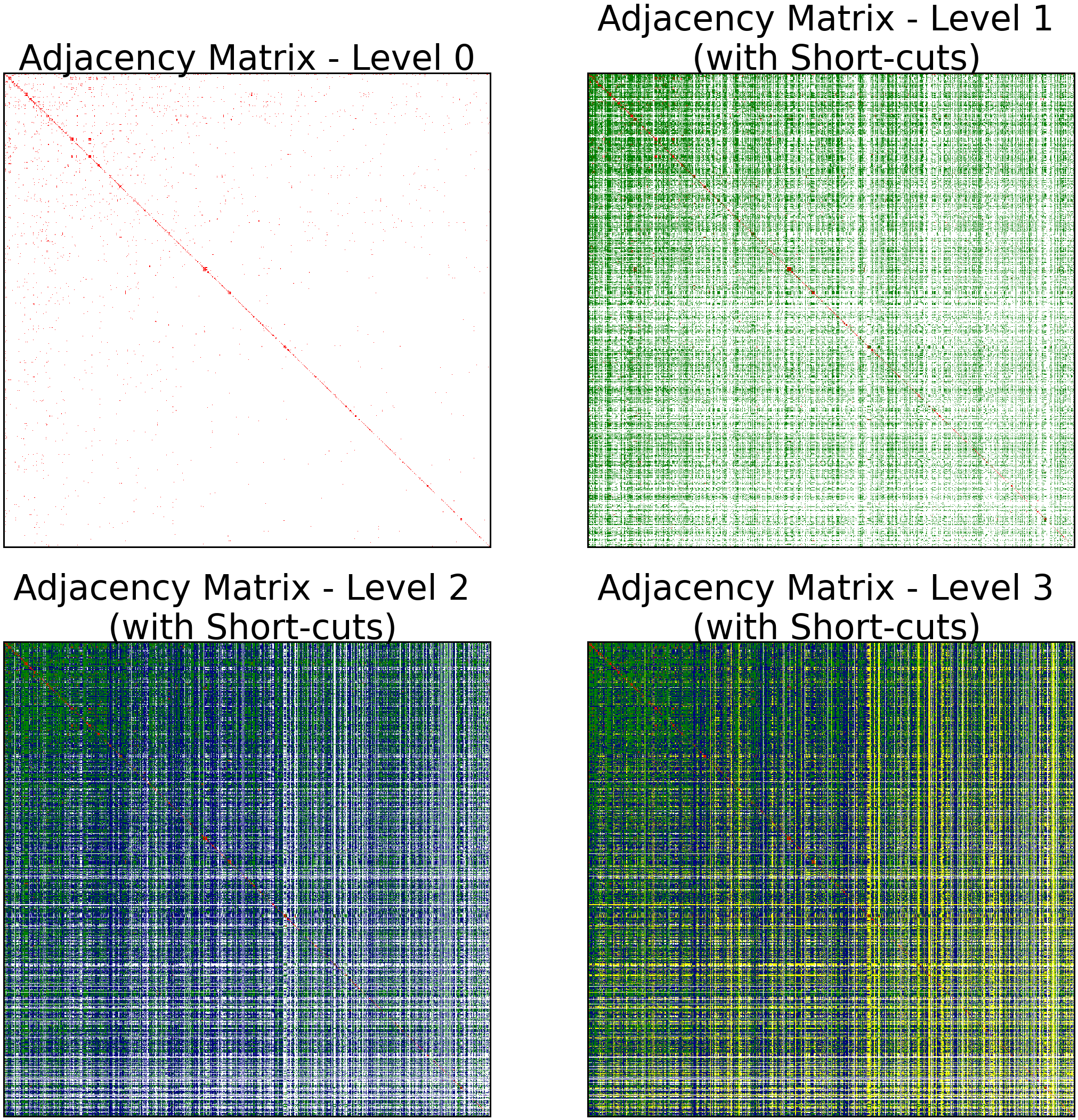}
\caption{
Visualisation of the adjacency matrices stacking the $1$-st (green), $2$-nd (blue), and $3$-rd (yellow) granularity levels on the Wiki dataset. 
}
\label{fig:diff_adj}
\vspace{-3mm}
\end{figure}

\smallskip\noindent
\textbf{Visualisation of adjacency matrices at different granularity levels}.
One of the fundamental limitations of existing GNNs is the inability of capturing 
long-range node interactions in the graph~\cite{LMQDAG19}.
We find that AdamGNN can provide a possible solution to overcome this limitation.
AdamGNN allows nodes to receive messages from far-away nodes with the support of the adaptive multi-grained structure. 
That said, the learned multi-grained structure can be regarded as a kind of \textit{short-cuts} to let far-away nodes be aware of each other.
We visualise these short-cut connections at different levels on the Wiki dataset in Figure~\ref{fig:diff_adj}.
Figure-level $0$ plots the original adjacency matrix, Figure-level $1$ exhibits the learned short-cuts by the first pooled super graph (in green).
Similarly, figure-levels $2$ and $3$ present the derived short-cuts by further stacking the adjacency matrices of the second (in blue) and third (in yellow) pooled graphs, respectively.
We can clearly see that the original graph of the Wiki dataset is very sparse, and AdamGNN adds short-cuts between nodes with the help of the learned multi-grained structure. In this way,
AdamGNN allows nodes to capture global information with few adaptively pooled graphs.

\begin{figure}[!ht]
\centering
\includegraphics[width=.9\linewidth]{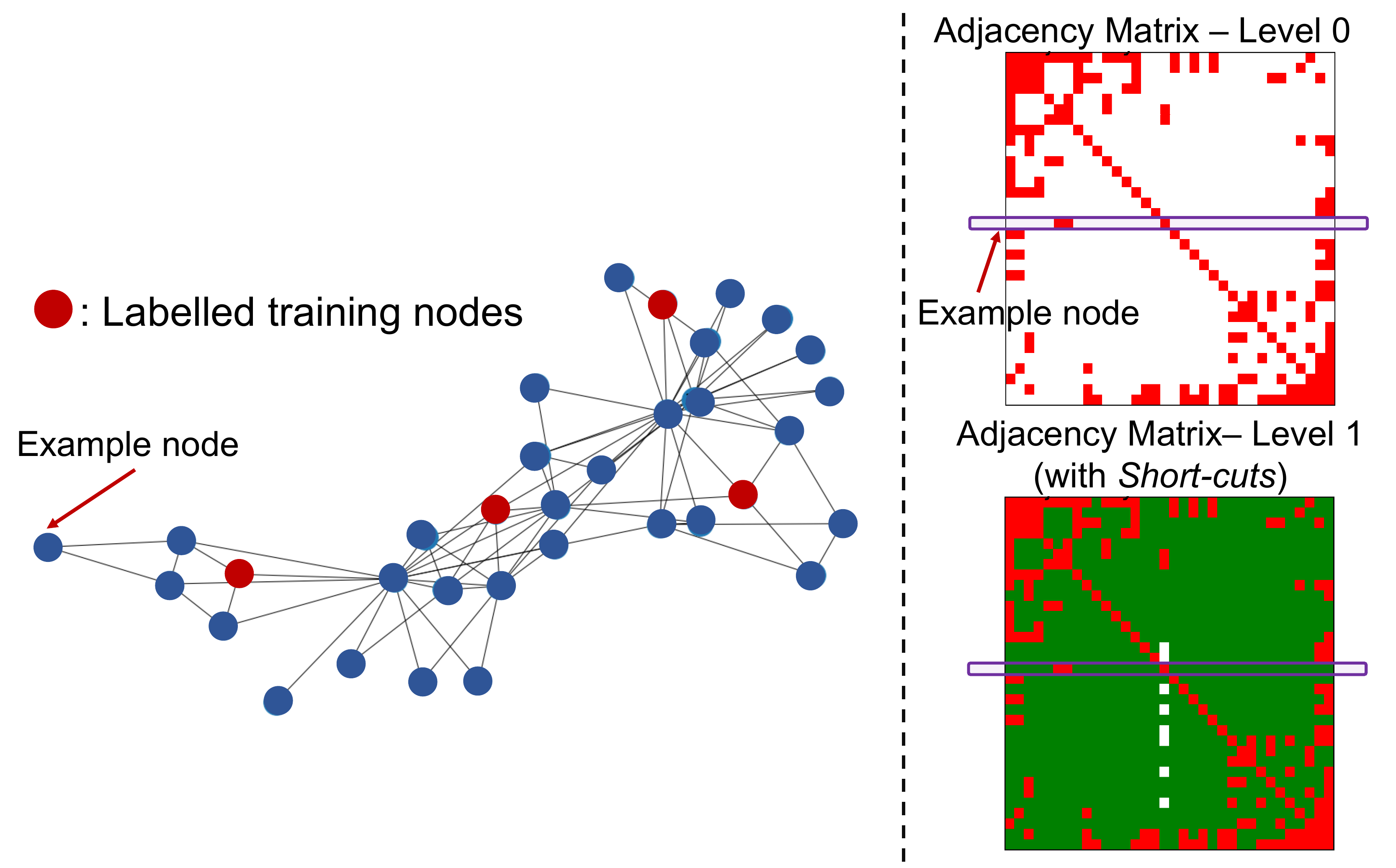}
\caption{
Visualisation of network structure and adjacency matrix at different granularity levels of \textit{Karate}-club dataset
}
\label{fig:karate_analysis}
\vspace{-4mm}
\end{figure}

\begin{table}[!ht]
\caption{Performance comparison for $1$-shot-NC task on \textit{Karate}-club dataset.
}
\label{table:karate_results}
\centering
\begin{tabular}{l  c  c }
\hline
Model   & $1$-Layer & $2$-Layers \\
\hline
GCN                     & 65.26         & 69.74     \\
AdamGNN ($1$-level)     & 88.65         & 97.91     \\
\hline
\end{tabular}
\vspace{-3mm}
\end{table}

\smallskip\noindent
\textbf{Exploration of \textit{short-cuts} of AdamGNN}.
To explore the \textit{short-cuts} derived by AdamGNN, we perform another empirical analysis on one additional network, i.e., the \textit{Karate}~\cite{Z77} club network.
We choose $1$-shot NC as target task, where we randomly select one sample from each class as training set, an equal number nodes as validation set and the rest nodes for test.
Network structure, experimental results is summarised in Table~\ref{table:karate_results} and two adjacency matrices are shown in Figure~\ref{fig:karate_analysis}. 
Short-cuts derived by the first pooled super graph are depicted in green in the level 1 adjacency matrix.
We find that AdamGNN outperforms GCN on $1$-shot NC task with up to $40.5\%$ performance improvements.
The two figures in the right part of Figure~\ref{fig:karate_analysis} clearly demonstrate that the \textit{short-cuts} derived by AdamGNN make the example node aware of far-away nodes.

\begin{figure}[!ht]
\centering
\includegraphics[width=.95\linewidth]{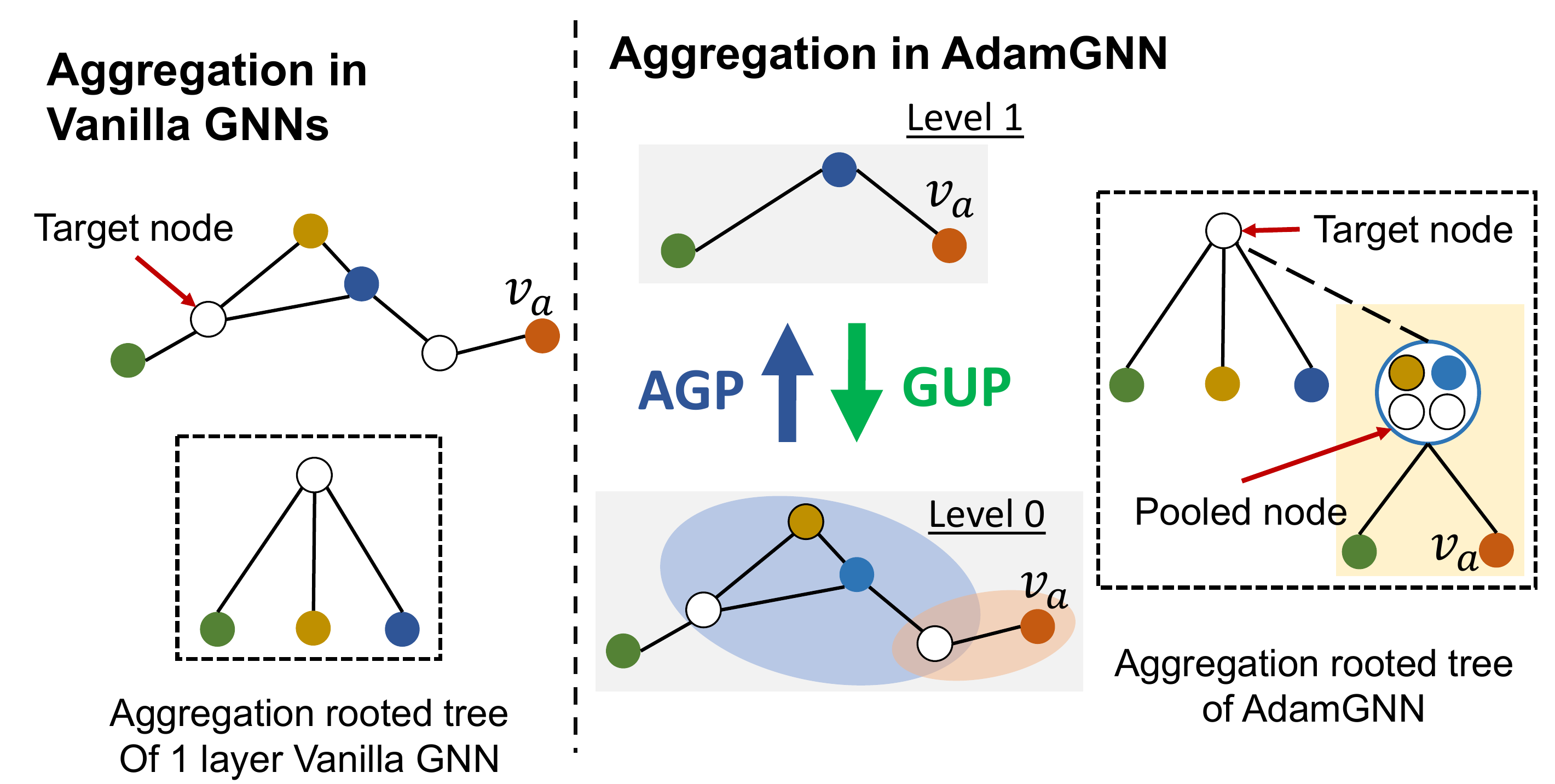}
\caption{
A toy example of the aggregation schemas of vanilla GNNs (left) and AdamGNN (right).
}
\label{fig:hierarcal_mp}
\vspace{-3mm}
\end{figure}

\smallskip\noindent
\textbf{Comparing aggregation mechanisms}.
To demonstrate the internal aggregation mechanism of AdamGNN and figure out the reason that it leads to performance improvements as shown in Section~\ref{subsec:experimental_evaluation_and_ablation_study}, we give a toy example of the aggregation schemas in vanilla GNNs and AdamGNN.
As shown in Figure~\ref{fig:hierarcal_mp}, $1$-layer vanilla GNN can only capture limited information as presented in the left rooted tree.
Nevertheless, thanks to the adaptive hierarchical structure learned by AdamGNN, target nodes can receive multi-grained semantics
as well as endowed with the ability to capture information from nodes from a long-range.
For instance, node $v_{a}$'s message cannot be obtained by target node with a few-layers vanilla GNN, but AdamGNN allows the target node to receive $v_{a}$'s information with $1$ granularity level.
The level $1$'s graph allows the super node to receive a message from node $v_{a}$ and pass it to the target nodes by \textit{flyback} aggregator.



\section{Conclusion} 
\label{sec:conclusion}
To summarise, we proposed AdamGNN, a method that integrates multi-grained semantics into node representations and realises collective optimisation between node- and graph-wise tasks in one unified process. 
We have designed an adaptive and efficient pooling operator with a novel ego-network selection approach to encode the multi-grained structural semantics, and a training strategy to overcome the over-smoothing problem. 
Extensive experiments conducted on $14$ real-world datasets showed the promising effectiveness of AdamGNN on node- and graph-wise downstream tasks.
One future direction is to appropriately apply the adaptive multi-grained structure on heterogeneous networks for node and graph level tasks.


\ifCLASSOPTIONcompsoc
  \section*{Acknowledgments}
\else
  \section*{Acknowledgment}
\fi
This work is supported by the Luxembourg National Research Fund 
through grant PRIDE15/10621687/SPsquared, and supported by Ministry of Science and Technology (MOST) of Taiwan under grants 110-2221-E-006-136-MY3, 110-2221-E-006-001, and 110-2634-F-002-051.

\bibliography{normal_generated.bib}
\bibliographystyle{IEEEtranS}

\vspace{-3mm}
\begin{IEEEbiography}[{\includegraphics[width=1in,height=1.25in,clip,keepaspectratio]{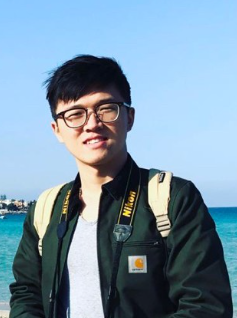}}]%
{Zhiqiang Zhong} is currently pursuing his Ph.D. degree at Faculty of Science, Technology and Medicine, University of Luxembourg, Luxembourg.
His principal research interest is in graph neural networks
and its applications in urban areas.
\end{IEEEbiography}

\vspace{-3mm}
\begin{IEEEbiography}[{\includegraphics[width=1in,height=1.25in,clip,keepaspectratio]{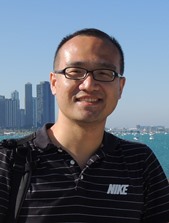}}]%
{Cheng-Te Li} is an Associate Professor at Institute of Data Science and Department of Statistics, National Cheng Kung University (NCKU), Tainan, Taiwan. He received my Ph.D. degree (2013) from Graduate Institute of Networking and Multimedia, National Taiwan University. Before joining NCKU, He was an Assistant Research Fellow (2014-2016) at CITI, Academia Sinica. Dr. Li's research targets at Machine Learning, Deep Learning, Data Mining, Social Networks and Social Media Analysis, Recommender Systems, and Natural Language Processing. He has a number of papers published at top conferences, including KDD, WWW, ICDM, CIKM, SIGIR, IJCAI, ACL, ICDM, and ACM-MM. He leads Networked Artificial Intelligence Laboratory (NetAI Lab) at NCKU.
\end{IEEEbiography}

\vspace{-3mm}
\begin{IEEEbiography}[{\includegraphics[width=1in,height=1.25in,clip,keepaspectratio]{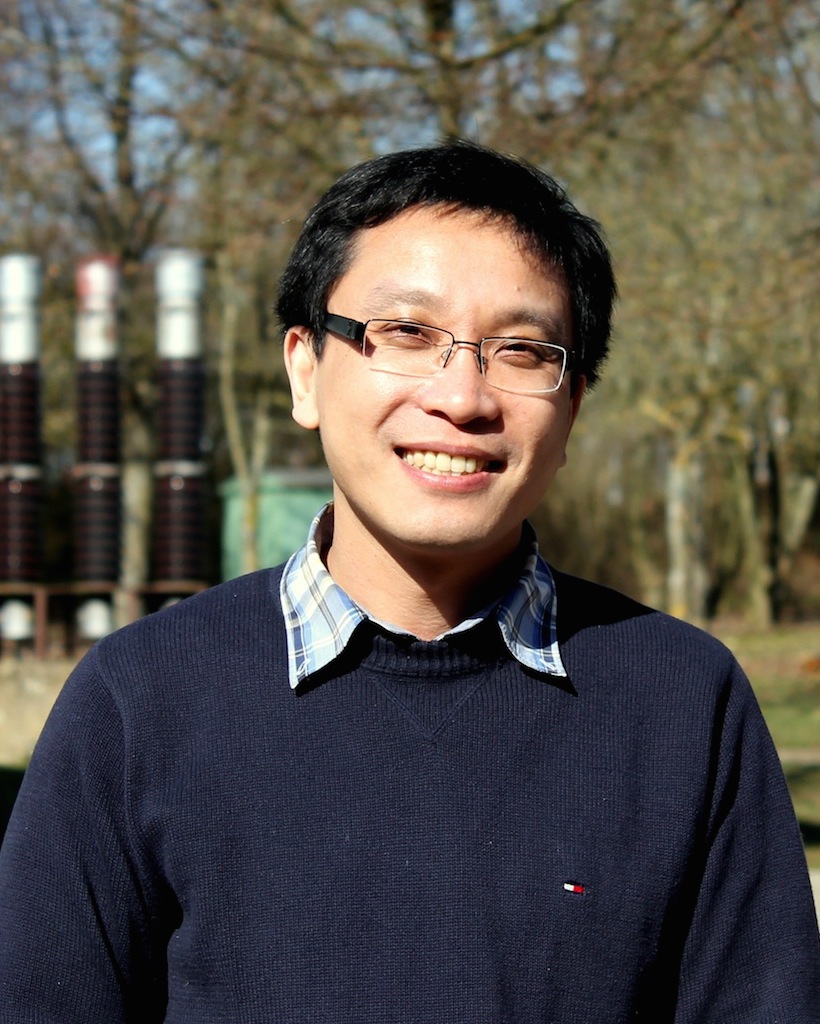}}]%
{Jun Pang} is currently a Senior Researcher at Faculty of Science, Technology and Medicine \& Interdisciplinary Centre for Security, Reliability and Trust, University of Luxembourg, Luxembourg. He received his Ph.D. degree from Vrije Universiteit Amsterdam in 2004. His research interests include formal methods, social media mining, computational systems biology, security and privacy. 
\end{IEEEbiography}




\end{document}